\documentclass{article} % For LaTeX2e
\usepackage[final]{colm2025_conference}

\usepackage{microtype}
\usepackage{hyperref}
\usepackage{url}
\usepackage{booktabs}
\usepackage{graphicx}
\usepackage{lineno}
\usepackage{xspace}
\usepackage{multirow}
\usepackage{wrapfig}
\usepackage{enumitem}

\definecolor{darkblue}{rgb}{0, 0, 0.5}
\hypersetup{colorlinks=true, citecolor=darkblue, linkcolor=darkblue, urlcolor=darkblue}

%%%%% NEW MATH DEFINITIONS %%%%%

\usepackage[cmex10]{amsmath}
\usepackage{amssymb}
\usepackage{amsfonts,bm}
 \usepackage{bbm}

\usepackage{hyperref}
 
\usepackage{xcolor}
\usepackage{soul}
\usepackage{algorithm}
\usepackage{algpseudocode}
\usepackage{thmtools, thm-restate}

\newenvironment{proof}[1][]{\textit{Proof#1: }}{\hfill$\square$\\}
\newcounter{remarkcnt}

\newcounter{interpcnt}

% Mark sections of captions for referring to divisions of figures

% Highlight a newly defined term

% Figure reference, lower-case.

% Figure reference, capital. For start of sentence

% Section reference, lower-case.

% Section reference, capital.

% Reference to two sections.

% Reference to three sections.

% Reference to an equation, lower-case.
\def\eqref#1{equation~\ref{#1}}
% Reference to an equation, upper case

% A raw reference to an equation---avoid using if possible

% Reference to a chapter, lower-case.

% Reference to an equation, upper case.

% Reference to a range of chapters

% Reference to an algorithm, lower-case.

% Reference to an algorithm, upper case.

% Reference to a part, lower case

% Reference to a part, upper case

\def\floor#1{\lfloor #1 \rfloor}
\def\1{\bm{1}}

% Random variables

% rm is already a command, just don't name any random variables m

% Random vectors

% Elements of random vectors

% Random matrices

% Elements of random matrices

% Vectors

\def\vw{{\bm{w}}}
\def\vx{{\bm{x}}}

% Elements of vectors

% Matrix

\def\mH{{\bm{H}}}

\def\mP{{\bm{P}}}

\def\mW{{\bm{W}}}

% Tensor
\DeclareMathAlphabet{\mathsfit}{\encodingdefault}{\sfdefault}{m}{sl}
\SetMathAlphabet{\mathsfit}{bold}{\encodingdefault}{\sfdefault}{bx}{n}

% Graph

\def\gE{{\mathcal{E}}}

\def\gQ{{\mathcal{Q}}}

% Sets

% Don't use a set called E, because this would be the same as our symbol
% for expectation.

% Entries of a matrix

% entries of a tensor
% Same font as tensor, without \bm wrapper

% The true underlying data generating distribution

% The empirical distribution defined by the training set

% The model distribution

% Stochastic autoencoder distributions

 % Laplace distribution

% Wolfram Mathworld says $L^2$ is for function spaces and $\ell^2$ is for vectors
% But then they seem to use $L^2$ for vectors throughout the site, and so does
% wikipedia.

 % See usage in notation.tex. Chosen to match Daphne's book.

\DeclareMathOperator*{\argmin}{arg\,min}

\title{ICQuant: Index Coding enables Low-bit LLM Quantization}

% Authors must not appear in the submitted version. They should be hidden
% as long as the \colmfinalcopy macro remains commented out below.
% Non-anonymous submissions will be rejected without review.
\author{Xinlin Li$^\dagger$, Osama Hanna$^\diamond$, Christina Fragouli$^\dagger$, and Suhas Diggavi$^\dagger$\\ 
$^\dagger$University of California, Los Angeles\\ $^\diamond$Meta, GenAI\\
Email:\{xinlinli, ohanna, christina.fragouli, suhasdiggavi\}@ucla.edu} 

% The \author macro works with any number of authors. There are two commands
% used to separate the names and addresses of multiple authors: \And and \AND.
%
% Using \And between authors leaves it to \LaTeX{} to determine where to break
% the lines. Using \AND forces a linebreak at that point. So, if \LaTeX{}
% puts 3 of 4 authors names on the first line, and the last on the second
% line, try using \AND instead of \And before the third author name.

\newcommand{\ours}{ICQuant\xspace}
\usepackage{colortbl}
\newcommand{\myrowcolour}{\rowcolor[gray]{0.925}}
\newcommand{\wo}{\text{w\kern-0.1em/\kern-0.1emo}\xspace}

\begin{document}

\ifcolmsubmission
\linenumbers
\fi

\maketitle

\begin{abstract}
The rapid deployment of Large Language Models (LLMs) highlights the need for efficient low-bit post-training quantization (PTQ), due to their high memory costs. 
A key challenge in weight quantization is the presence of outliers, which inflate quantization ranges and lead to large errors. While a number of outlier suppression techniques have been proposed, they either: fail to effectively shrink the quantization range, or incur (relatively) high bit overhead. In this paper, we present \ours, a novel framework that leverages outlier statistics to design an efficient index coding scheme for outlier-aware weight-only quantization. Compared to existing outlier suppression techniques requiring $\approx 1$ bit overhead to halve the quantization range, \ours requires only $\approx 0.3$ bits; a significant saving in extreme compression regimes (e.g., 2-3 bits per weight). \ours can be used on top of any existing quantizers to eliminate outliers, improving the quantization quality. Using just 2.3 bits per weight and simple scalar quantizers, \ours improves the zero-shot accuracy of the 2-bit Llama3-70B model by up to 130\% and 150\% relative to QTIP \citep{tseng2024qtip} and QuIP\# \citep{tseng2024quip2}; and it achieves comparable performance to the best-known fine-tuned quantizer \citep{malinovskii2024pv} without fine-tuning.\footnote{Code available at: \url{https://github.com/Avery-xl/ICQuant}}
\end{abstract}

\section{Introduction}\label{sec:intro}
Despite their success, the accessibility of Large Language Models (LLMs) is curbed by their significant memory and computational power requirements. Weight quantization, a technique that represents model weights using lower precision, is promising to alleviate these demands while preserving the model's performance. 
Quantization not only enables a reduced memory footprint, but also accelerates inference time as memory fetch latency is reduced.
As a result,  quantized models benefit from reduced energy consumption and lower running costs.  Quantization opens the road for the deployment of LLMs over resource-constrained devices (for instance, smartphones that have memory and battery constraints), better positions LLMs to serve real-time applications (such as speech recognition and real-time translation that require fast inference times), and overall lowers the deployment cost, leading to more sustainable computing and broader usage.  

Weight-only post-training quantization (PTQ) has emerged as a popular compression method \citep{frantar2022gptq,lin2024awq,zhu2024survey}. It avoids the heavy computational cost of retraining associated with quantization-aware training (QAT) and enables the on-demand quantization at a required precision.
Yet, a major obstacle in achieving low-bit ($\leq 4$ bits) weight quantization without significant performance deterioration is the presence of weights with exceptionally high magnitudes that reside in the tails of the weight distribution, known as outliers. Outliers significantly expand the required quantization range, making it hard to cover with a small number of bits without detrimental quantization errors.

Given its importance, a number of techniques have been proposed to mitigate the challenge of outliers. For instance, a common technique is to clip the outliers to fixed thresholds. Recent work, such as BitDistiller \citep{du2024bitdistiller} and OmniQuant \citep{shao2023omniquant}, advanced this concept by optimizing the clipping range dynamically. However, the substantial quantization errors associated with outliers limit its practical application. Another prevalent approach is weight grouping \citep{frantar2022gptq,shao2023omniquant,lin2024awq}. Weight quantization typically occurs at per-(output) channel, and grouping further divides weights into small continuous blocks (e.g., with size 128, 64) to leverage the reduced local ranges. Our statistical analysis (in Section \ref{sec:stats}) reveals that the effectiveness of this method is limited, as outliers frequently persist within individual groups. Furthermore, the storage requirements for group-specific quantization parameters, such as scales, zero-points, and look-up tables, make this approach impractical for non-uniform and vector quantization schemes.

More recent approaches include SqueezeLLM \citep{kim2023squeezellm} and SpQR \citep{dettmers2024spqr}, which maintain outliers (and/or important weights) in full precision (FP16). While effective, this method also incurs significant storage overhead due to storing the full precision outliers and their indices. QuIP \citep{chee2023quip} proposed an innovative incoherence processing technique, which employs random rotation matrices on both sides of weight matrices to suppress outliers. However, additional matrix operations introduce notable computational overhead during inference. Moreover, when many layers's weights exhibit independently and identically distributed Gaussian behavior (as observed in \cite{qlora}), such rotations often yield small improvement, as discussed in Appendix~\ref{app:IP}. 

In this paper, we introduce \ours, a novel quantization framework that enables us to deal separately with outliers with minimal storage overhead (approximately $0.3$ bits per weight). In particular, \ours explicitly partitions the weights into outliers and inliers, keeps track of the positions of outliers using an optimized indexing scheme, and uses a different quantization codebook with reduced range for the outlier and inlier weights. A key insight on why our approach works well, is the empirical observation that, within each row of the weight matrices, the positions (indices) of outliers tend to follow a uniform distribution, and thus can be efficiently encoded. We verify this property across multiple models (including the Llama2, 3, 4, and Qwen2.5 families) that have a wide range of scales. We also note that, even if this property were not to hold in other/future models, a one-time random permutation can enforce uniformity without affecting model output and inference speed, and thus \ours would still work well.

Our contributions include:
\begin{itemize}
\item We introduce \ours, a novel outlier-aware LLM quantization framework that leverages statistical properties of outliers for effective weight quantization. An attractive feature of \ours is that it can be universally applied on top of any quantization scheme.
\item We conduct extensive experiments showing that \ours can significantly improve the quantization quality, in 2-4 bits regimes, even of simple scalar quantizers,  achieving comparable results to state-of-the-art schemes that rely on computationally intensive vector quantization and expensive fine-tuning procedures. 
\item We  {analytically} derive an upper bound {(see Lemma~\ref{lm:uniform})} 
on the bits required by our {index coding} scheme, which closely aligns with our experimental results, and amounts to $\approx 0.3$ bits per weight.
\item {Our work reveals that the outlier weight spatial distribution is well approximated by a uniform distribution, which may be of independent interest.}
\end{itemize}
The rest of the paper is organized as follows. Section~\ref{sec:stats} presents the statistical properties of outliers that motivate our scheme; Section~\ref{sec:method} introduces our design of \ours; Section~\ref{sec:exp} provides our experimental evaluation; Section~\ref{sec:related} discusses additional related work, and Section~\ref{sec:concl} concludes the paper.

\section{Statistics of Outliers}\label{sec:stats}
Although various techniques exist (as described in Section~\ref{sec:intro}) to reduce the weight quantization range, the statistical properties of weight outliers, such as their distribution and impact on quantization efficiency, remain poorly understood. This section examines these statistical characteristics in detail.

In this paper, we define outliers as the small percentage of weights with high magnitude (i.e., those in the tails of the weight distribution). Specifically, for a weight matrix $\mW \in \mathbb{R}^{d_{out}\times d_{in}}$, outliers are identified as the top $\gamma$ weights (e.g., $\gamma=5\%$) with the highest absolute value in each output channel (i.e., each row $\vw \in \mathbb{R}^{d_{in}}$).

\begin{figure}[t]
    \centering
    \includegraphics[width=0.58\columnwidth]{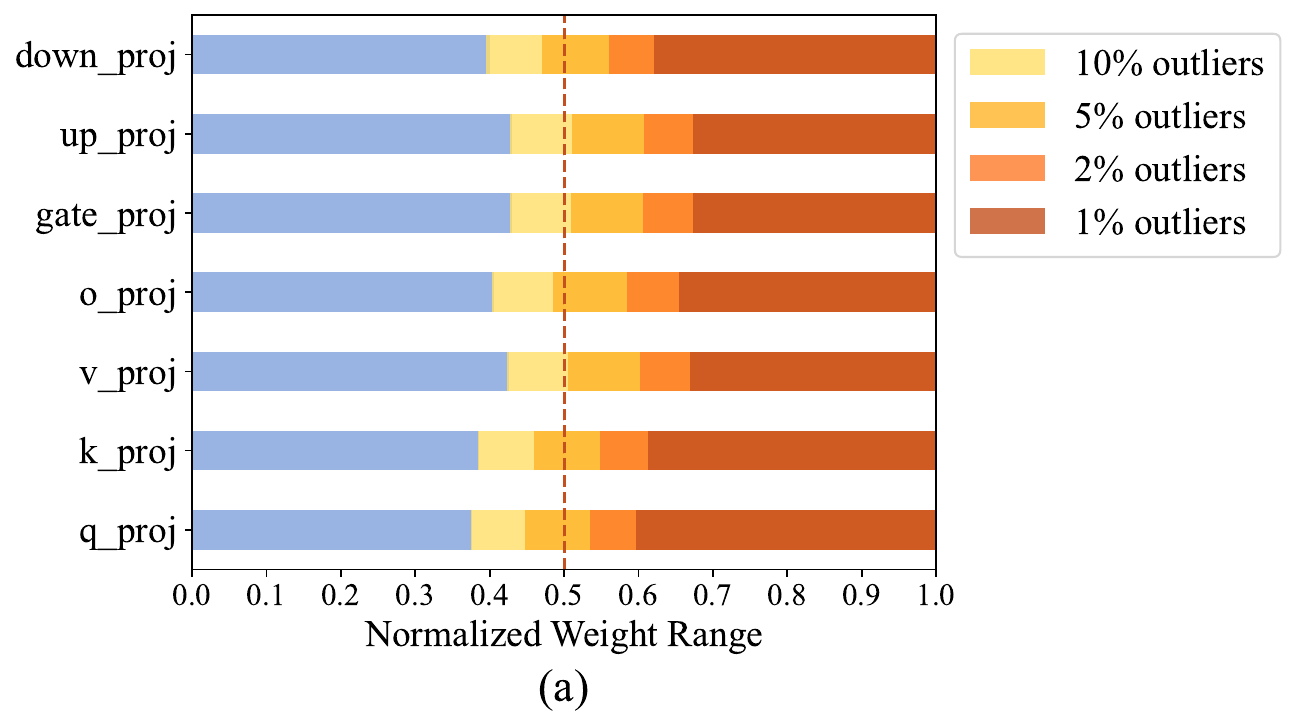}
    \includegraphics[width=0.33\columnwidth]{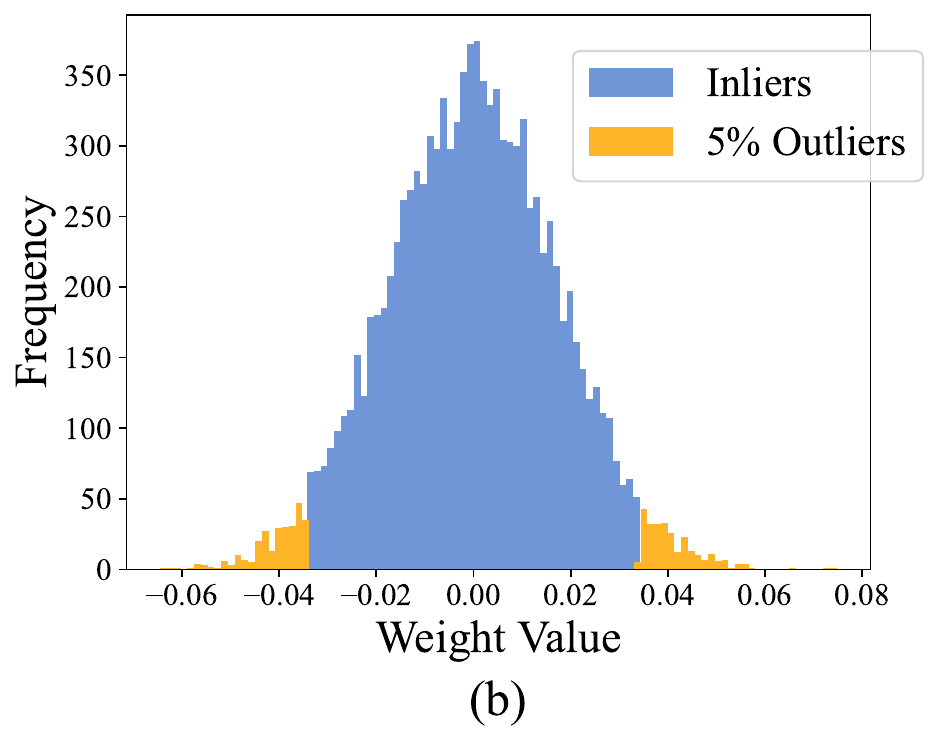}
    \vspace{-.1in}
    \caption{[Llama2-7B] (a) the normalized range taken by different amounts of outliers (starting from $1\%$), where the values of each type of layer are averaged over the whole model; (b) histogram of a row of weights.}
    \label{fig:stats1}
\end{figure}

\textit{1). 5\% outliers take approximately the 50\% range of all weights.}\\
Previous studies have focused primarily on a narrow subset of outliers ($< 1\%$), and we extend this analysis to examine the relationship between the proportion of outliers and the quantization range. Figure~\ref{fig:stats1} (a) plots the normalized weight range taken by outliers across layers in Llama2-7B model (more detailed analysis provided in Appendix~\ref{app:distr}), which demonstrates that the top 5\% of weight outliers account for approximately 50\% of the total value range $r$ (where $r=\max(\vw) - \min(\vw)$). Figure~ (b) provides a closer view, showing the distribution of (a row of) weights in Llama2-7B, where 5\% of outliers are highlighted. 

This observation reveals the significant inefficiency in low-bit quantization. For example, considering the uniform scalar quantization, using $n$ bits yields a resolution of $\frac{r}{2^n}$. If the range is halved, then $n-1$ bits would suffice to reach an equivalent resolution $\frac{1}{2} \frac{r}{2^{n-1}}=\frac{r}{2^n}$. Our finding, therefore, indicates that one bit of quantization capacity is effectively allocated to represent just 5\% of the weights. This creates notable inefficiency, particularly when $n$ is small (e.g., 2 or 3 bits are allowed in total).

\begin{figure}[!t]
  \centering
  \includegraphics[width=0.45\textwidth]{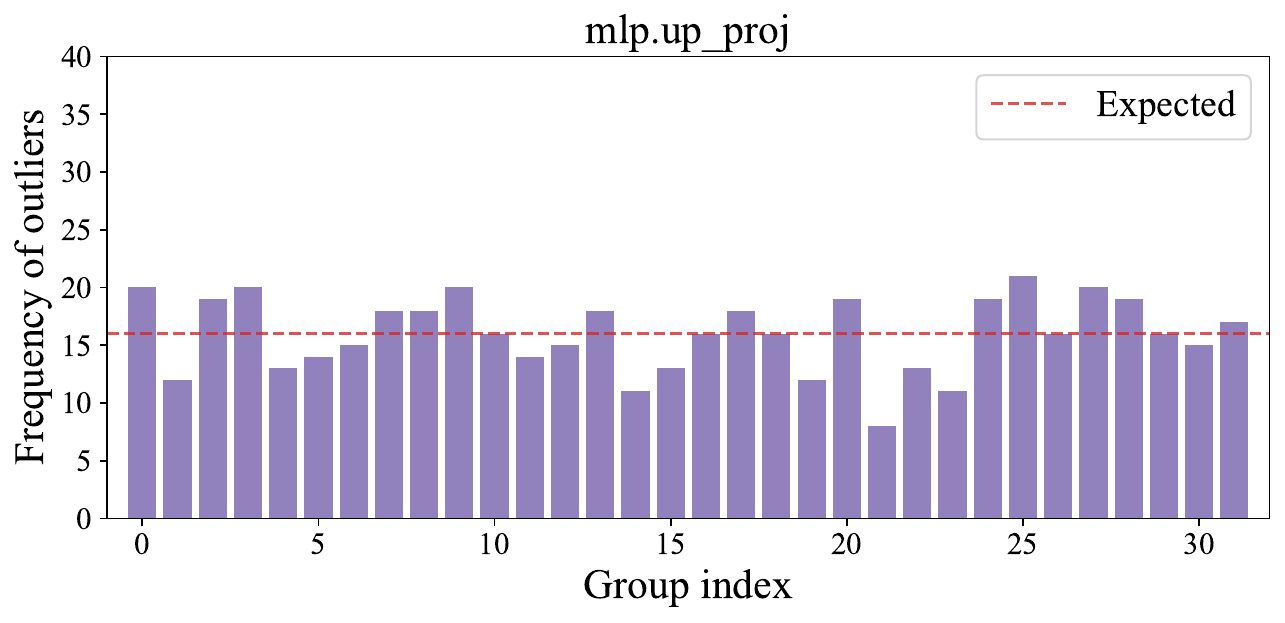}
  \includegraphics[width=0.45\textwidth]{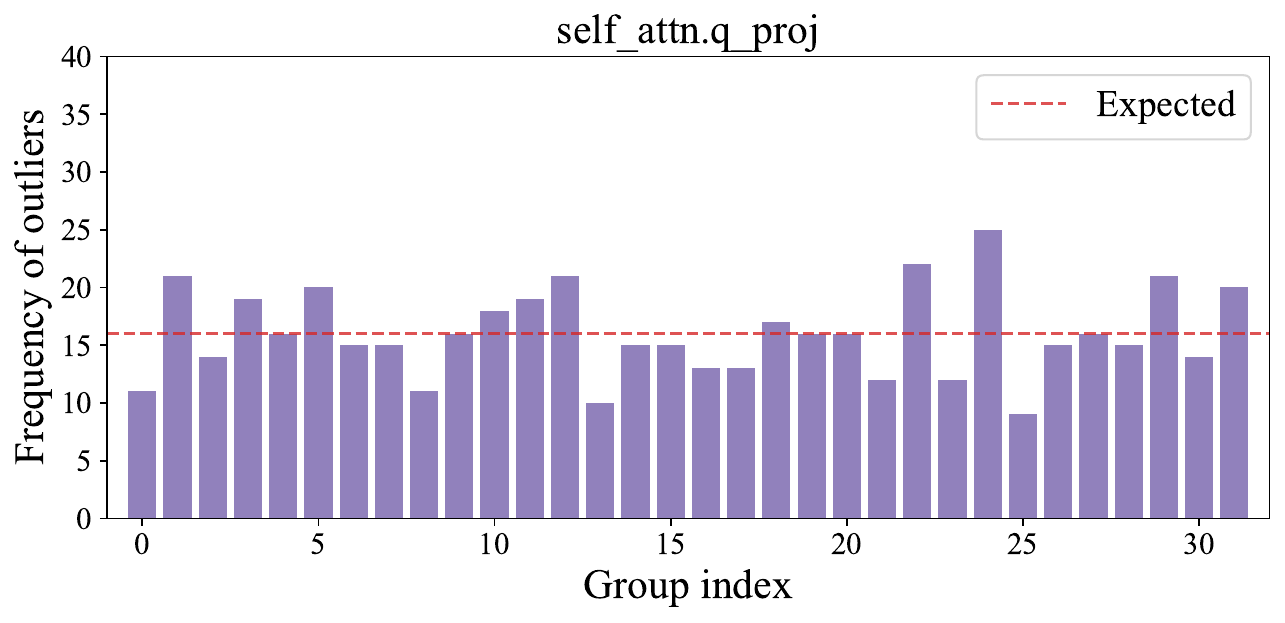}
  \vspace{-.1in}
  \caption{[Llama2-7B] The frequency of outliers in each group of 256 consecutive weights.}
  \label{fig:stats2}
\end{figure}

\begin{table}
\centering
\resizebox{0.85\textwidth}{!}{
    \begin{tabular}{c|c|c|c|c|c|c|c}
    \toprule
     & \textbf{q\_proj} & \textbf{k\_proj} & \textbf{v\_proj} & \textbf{o\_proj} & \textbf{up\_proj} & \textbf{gate\_proj} & \textbf{down\_proj}\\
    \midrule
    Llama2 - 7B & 3.38\% & 3.39\%  & 3.14\% & 62.15\% & 3.10\% & 3.12\% & 2.37\% \\
    \midrule
    Llama3 - 8B & 3.01\% & 3.13\%  & 2.95\% & 95.25\% & 3.10\% & 3.08\% & 2.11\% \\
    \bottomrule
    \end{tabular}
}
\caption{The rejection rate (i.e., the percentage of weight channels where outlier indices do not follow a uniform distribution) according to the           Chi-Square test with 0.05 significance.}
\label{tab:chi2-1}
\end{table}

\textit{2). Outliers are spread out uniformly.}\\
More importantly, we found that the outlier {indices} are uniformly distributed across each output channel (i.e., each row of the weight matrix).
In Figure~\ref{fig:stats2}, we show the frequency of outliers within each group of 256 consecutive weight elements. We validated this distribution using the Chi-Square test \citep{pearson1900x}, with results for Llama2-7B shown in Table~\ref{tab:chi2-1} (see more results in Appendix~\ref{app:test}). The analysis reveals that around 97\% of the weight outliers' positions follow a uniform distribution, except for out\_projection layers in self-attention blocks. This uniform distribution pattern explains why simply using small group sizes for quantization (explained in Section~\ref{sec:intro}) provides limited benefit in addressing the outlier issue. In the next section, we present an efficient coding scheme for storing outliers by leveraging this structural characteristic. Notably, we empirically observed that the deviations in out\_projection layers have minimal impact on the overhead of our proposed coding scheme.

\noindent{\bf Observation.} The uniform distribution of outlier positions likely stems from the transformer's Gaussian-like initialization \citep{glorot2010understanding} and over-parameterization \citep{kaplan2020scaling} of large models. Yet we note that, even when this uniform structure does not naturally emerge, we can enforce it by randomly permuting the input channels of each linear layer in advance - a process that preserves both model output and architecture, as we discuss in Appendix~\ref{app:randperm}.

\section{Methodology}\label{sec:method}
In this section, we present a novel quantization framework \ours for low-bit ($\le 4$ bits) weight-only quantization of LLMs. Our approach 
preserves the effective quantization range through strategic separation of outliers and efficient index storage. The proposed method is compatible with various quantization schemes, including uniform/non-uniform scalar and vector quantizations.

We use $\gQ(\cdot)$ to denote a quantization scheme (or quantizer), which is associated with a set of quantization parameters (such as scales, zeros, and lookup tables). For simplicity, we refer to all these quantization parameters as a codebook. We define the quantization size as the average number of bits used to store each quantized weight.

\subsection{Outlier-Grouping Strategy}
Building upon our first observation from Section~\ref{sec:stats}, we propose separately quantizing the top 5\% outliers and inlier values, with each utilizing approximately half of the quantization range for the models we quantize in Section~\ref{sec:exp}. Given an output channel (i.e., a row) of weights $\vw\in\mathbb{R}^{d_{in}}$, we partition them into two groups $\{\vw_o, \vw_i\}$, where $\vw_o$ collects outliers and $\vw_i$ collects inlier values. These are quantized into $\{\gQ_1(\vw_o,),\gQ_2(\vw_i)\}$ using two independent quantizers $\gQ_1(\cdot)$ and $\gQ_2(\cdot)$ {and the same number of bits}. Our analysis (in Section~\ref{sec:stats}) shows that 5\% of outliers take approximately 50\% of the weight range, namely 
$$\text{range}(\vw_o) \approx \text{range}(\vw_i) \approx \frac{1}{2}\text{range}(\vw).$$

Figure~\ref{fig:overview} (a) demonstrates an example using rounding-to-nearest (RTN) uniform quantization, where we compare vanilla-RTN against our proposed approach. Notably, since the ranges of both $\vw_o$ and $\vw_i$ are halved, we employ 2-bit quantization in our method versus 3-bit quantization in vanilla-RTN. Figure~\ref{fig:overview} (c) visualizes the experimental results on a $\vw$ from Llama2-7B model, which shows that our INT2 quantization achieves comparable resolution to INT3 vanilla-RTN.

It is worth noting that traditional grouping requires $\frac{d_{in}}{g}$ codebooks, where $g$ is the size of each group. The storage overhead for one additional codebook in our case is negligible, which makes it suitable for non-uniform quantization or vector quantization schemes that typically use large codebooks.

\begin{figure}[t]
    \centering
    \includegraphics[width=0.65\textwidth]{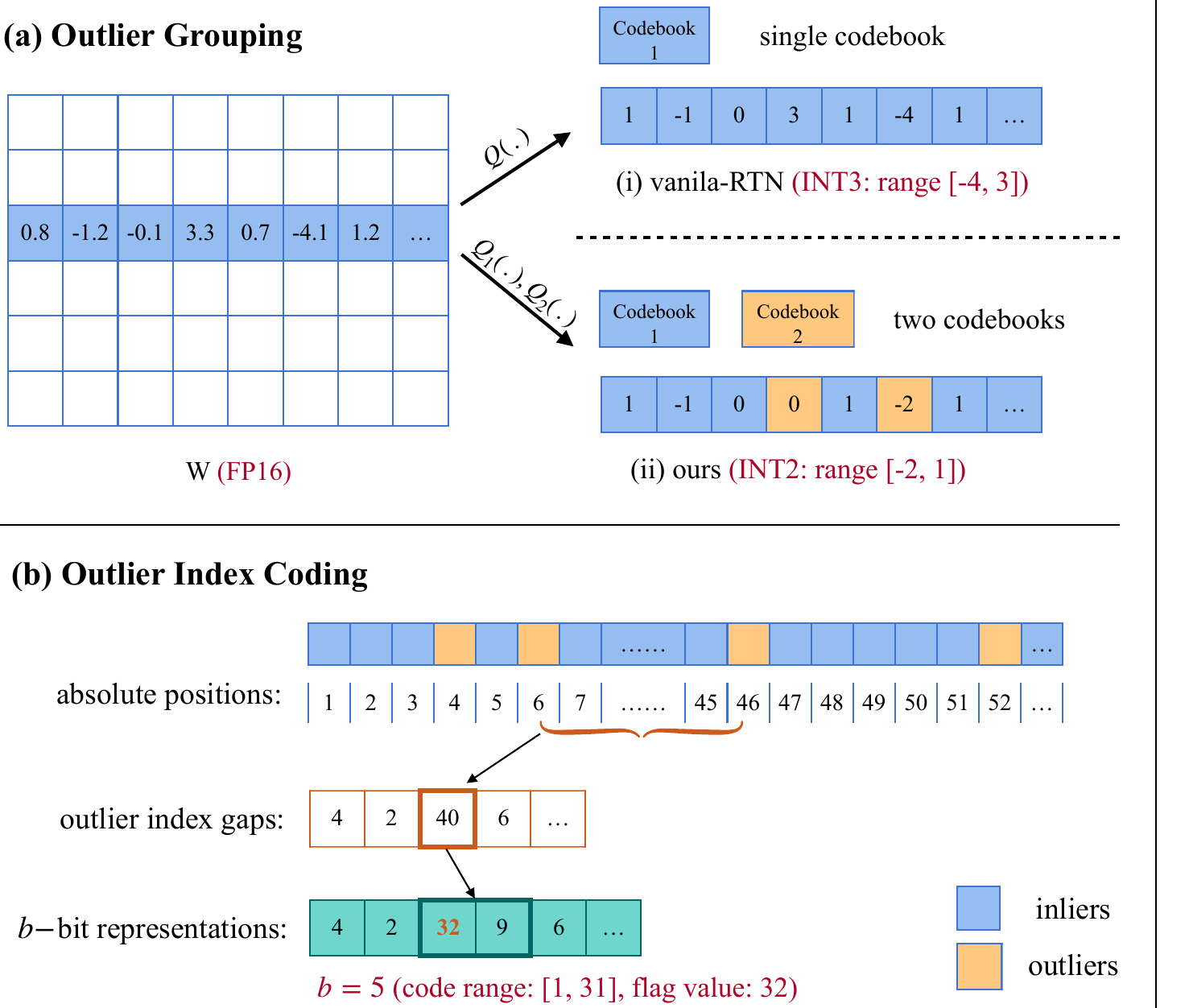}
    \includegraphics[width=0.3\textwidth]{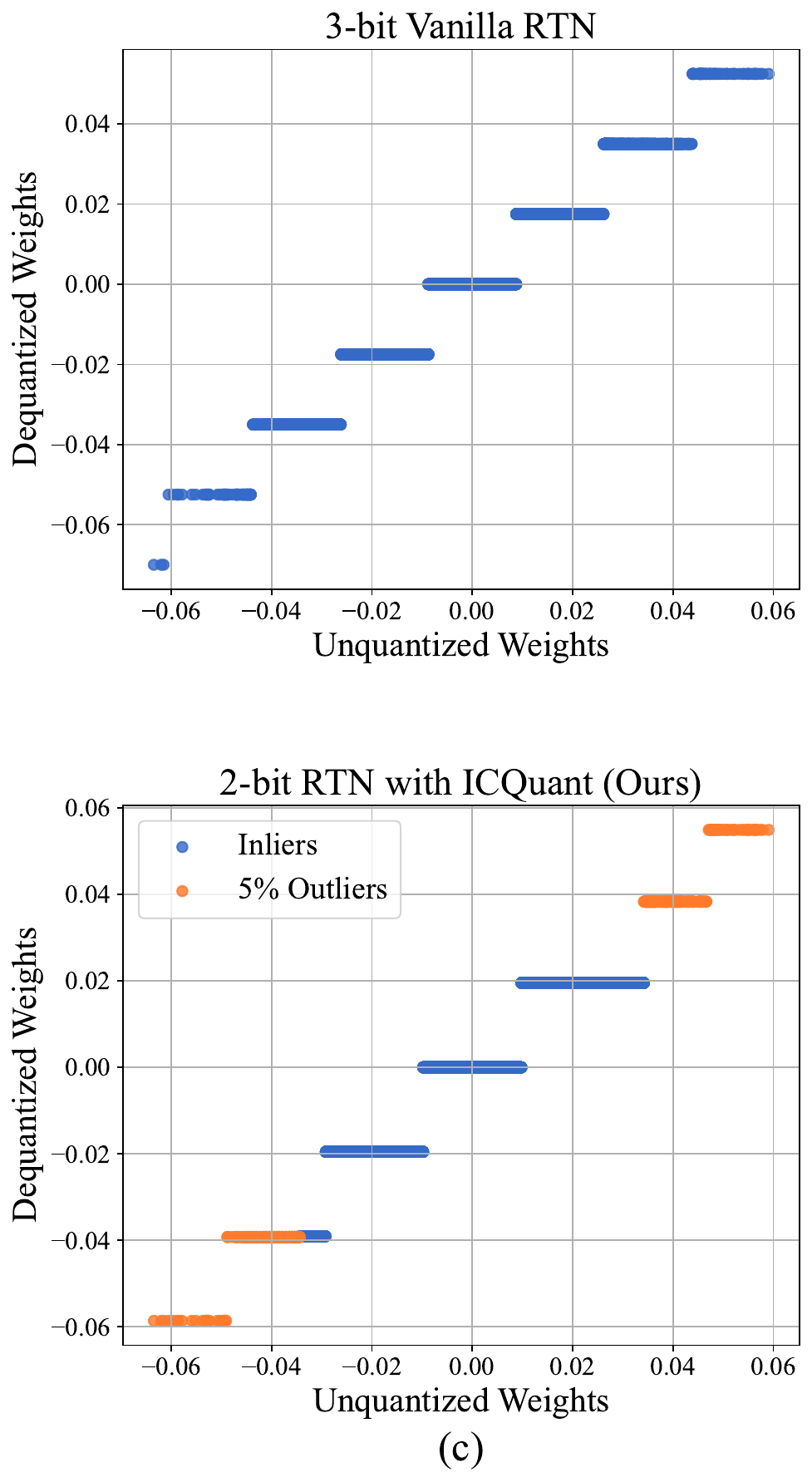}
    \vspace{-.1in}
    \caption{(a) Comparison of vanilla-RTN and ICQuant in 2-bit uniform quantization. (b) An example of our outlier index coding scheme. (c) Quantization results on a row of weights from the Llama2-7B model.}
    \label{fig:overview}
\end{figure}

\subsection{Efficient Outlier Index Coding}\label{subsec:ic}
A critical challenge is how to efficiently store the outlier position information. Traditional methods such as binary flags or direct index storage are prohibitively expensive. Specifically, attaching a binary flag to signal outliers requires 1 extra bit per weight. Although storing outlier indexes needs much fewer entries, each entry requires at least 16 bits due to the large dimensionality of LLMs (e.g., $d_{in}$ reaching 50K in Llama3-405B model). This approach also results in roughly 0.8 extra bits per weight for 5\% outliers for the Llama3-405B model.

Leveraging the uniform distribution pattern of outlier positions in $\vw$, as discussed in Section~\ref{sec:stats}, we propose storing the index gaps between adjacent outliers rather than their absolute indices. This significantly reduces storage requirements. Let $\gamma$ denote the outlier ratio, $b$ denote the number of bits used to represent each index (gap), and $B$ represent the effective average index storage cost per weight (total number of bits to store outlier indices divided by the row size). Given weights $\vw\in \mathbb{R}^{d_{in}}$, let $\{i\}_{k=1}^{p}$ be the absolute indices of the outliers, where $p=\floor{\gamma d_{in}}$. Now taking %the top $6.25\%$ outliers as 
a simple example, {if $\gamma=5\%$ and the distance between each pair of consecutive outliers is up to 32,
i.e, $i_{k+1} - i_{k} \in \{1,\ldots,32\}, \forall k=1,\ldots,p-1$ - then $b=5$ bits (supporting the range [1, 32])
is sufficient to store each index gap, reducing the storage overhead to merely $B=\gamma b=0.25$ extra bits per weight.} 

Although outliers are uniformly spread out, we need to address the practical randomness that the inter-outlier gaps vary. {Consider the supporting range $[1, 2^{b}]$ of $b$-bit representations,} gaps not larger than $2^{b}$ can be well covered. %by the supporting range of $b$-bit representation.
The challenge arises when the inter-outlier gaps exceed the $2^{b}$-unit threshold. To address this issue, we designate the value $2^{b}$ as a flag to indicate large gaps requiring index count accumulation, as illustrated in Figure~\ref{fig:overview} (b). {As a result, we have $[1, 2^{b}-1]$ left to represent the value of gaps.} While this accommodation could potentially introduce additional storage overhead, we provide a tight upper bound on this cost in Lemma~\ref{lm:uniform}, with proof given in Appendix~\ref{app:proof1}.

\begin{restatable}{lemma}{lmuni}
\label{lm:uniform}
For a weight $\vw \in \mathbb{R}^{d_{in}}$ with $\gamma d_{in}$ outliers, {if the weight values are randomly permuted (with uniform outlier positions) and} $b$ bits are used to encode each index gap, the total storage overhead $B$ (defined as the number of bit per weight to store the outlier positions) satisfies
\begin{equation*}
    \mathbb{E}_{\text{uniform}}(B) \le \gamma b (1 + \frac{1}{e^{\gamma(2^{b}-1)}-1}).
\end{equation*}
\end{restatable}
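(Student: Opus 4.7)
The plan is to reformulate $B$ as a sum over inter-outlier gaps, then compute (or upper bound) the expectation of that sum under the uniform-placement assumption. Set $M := 2^b - 1$, the number of legitimate gap values a single $b$-bit codeword can express. By the construction in Section~\ref{subsec:ic}, a gap $g$ between two consecutive outliers is encoded by writing $\lceil g/M \rceil$ codewords of $b$ bits each (one or more ``continuation'' flags equal to $2^b$, followed by a single residual value in $\{1,\dots,M\}$). Summing over all $p = \gamma d_{in}$ gaps $g_1,\dots,g_p$ and dividing by $d_{in}$,
\begin{equation*}
B \;=\; \frac{b}{d_{in}} \sum_{k=1}^{p} \left\lceil \frac{g_k}{M} \right\rceil .
\end{equation*}

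Next I would analyze the distribution of a single gap $g_k$. Under a uniformly random placement of $p$ outliers in $d_{in}$ positions, each slot is (marginally) an outlier with probability $\gamma$, and the gap to the next outlier is well approximated by a geometric random variable with parameter $\gamma$, i.e., $\Pr(g \ge t) = (1-\gamma)^{t-1}$ for integer $t \ge 1$. (For large $d_{in}$ the dependence across gaps is negligible, and boundary effects contribute lower-order terms.) Using the tail-sum identity $\mathbb{E}[\lceil g/M\rceil] = \sum_{k\ge 0} \Pr(\lceil g/M\rceil > k) = \sum_{k\ge 0} \Pr(g > kM)$, I would obtain
\begin{equation*}
\mathbb{E}\!\left[\left\lceil \tfrac{g}{M}\right\rceil\right] \;=\; \sum_{k=0}^{\infty} (1-\gamma)^{kM} \;=\; \frac{1}{1-(1-\gamma)^{M}} .
\end{equation*}

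To recover the closed form in the lemma, I would apply the elementary inequality $1-\gamma \le e^{-\gamma}$ (valid for $\gamma \in [0,1]$), which gives $(1-\gamma)^M \le e^{-\gamma M}$, hence $1/(1-(1-\gamma)^M) \le 1/(1-e^{-\gamma M})$. Linearity of expectation and $p/d_{in}=\gamma$ then yield
\begin{equation*}
\mathbb{E}_{\text{uniform}}(B) \;\le\; \gamma b \cdot \frac{1}{1-e^{-\gamma M}} \;=\; \gamma b \left(1 + \frac{1}{e^{\gamma(2^{b}-1)}-1}\right),
\end{equation*}
using $1/(1-x) = 1 + x/(1-x)$ with $x = e^{-\gamma M}$.

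The main technical delicacy is justifying the geometric model for the gaps. Rigorously, the gaps in a uniform allocation of $p$ points among $d_{in}$ slots are exchangeable but not independent, and the marginal gap distribution is hypergeometric rather than geometric; similarly, one must handle the two boundary gaps (before the first and after the last outlier). I expect these issues to be absorbed cleanly: the hypergeometric tail $\Pr(g \ge t)$ can be upper bounded by $(1-\gamma)^{t-1}$ via a straightforward coupling, and the boundary gaps contribute at most an $O(b/d_{in})$ additive term which is dominated by the stated bound. Linearity of expectation, which does not require independence, then closes the argument.
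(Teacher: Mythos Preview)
Your approach is correct and essentially matches the paper's: both decompose $B$ into per-gap codeword counts, use exchangeability plus linearity of expectation to reduce to a single gap, bound its tail probability, and sum the resulting geometric series. The only difference is in the intermediate tail bound. The paper computes the exact marginal $\Pr(g > jM) = \binom{d_{in}-jM}{p}/\binom{d_{in}}{p}$ and bounds it by $(1-jM/d_{in})^p$ (grouping the product over the $p$ outlier indices); your bound $(1-\gamma)^{jM}$ groups the \emph{same} product over the $jM$ position indices instead, and it is an exact inequality --- write $\binom{d_{in}-jM}{p}/\binom{d_{in}}{p} = \prod_{i=0}^{jM-1}\bigl(1-\tfrac{p}{d_{in}-i}\bigr) \le (1-\gamma)^{jM}$ --- so your ``geometric approximation'' is already a rigorous upper bound and no coupling or asymptotics are needed. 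One minor simplification: only $p$ gaps are actually encoded (the gap before the first outlier is included, the gap after the last outlier is not), and by the standard stars-and-bars symmetry all $p+1$ gaps share the same marginal, so no separate boundary term arises.
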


\begin{figure}[t]
    \centering
    \includegraphics[width=0.42\textwidth]{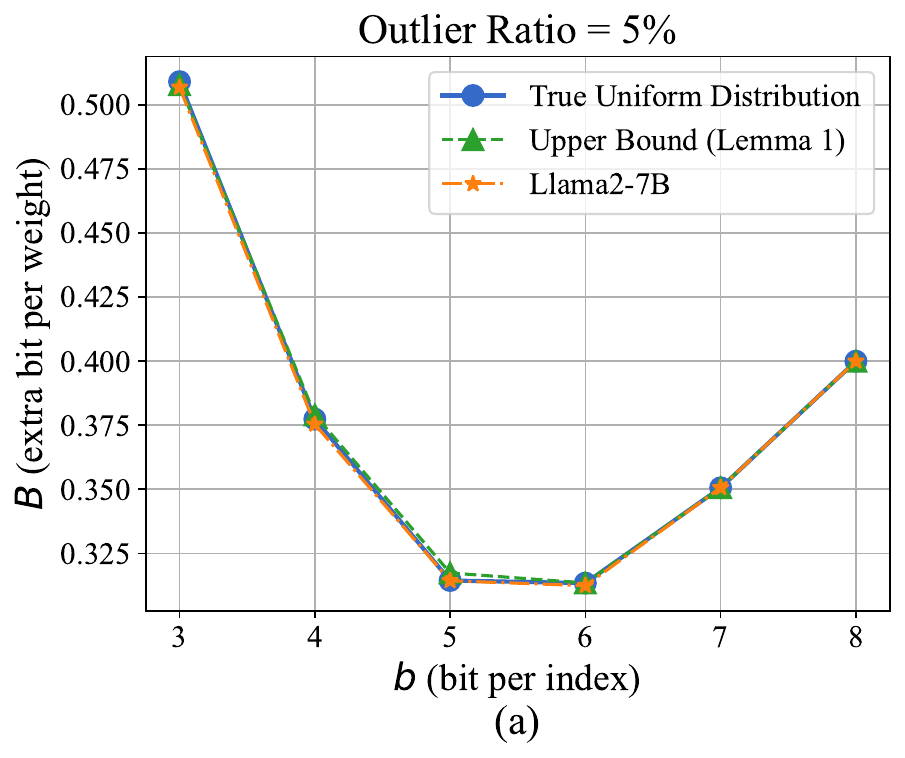}
    \includegraphics[width=0.42\textwidth]{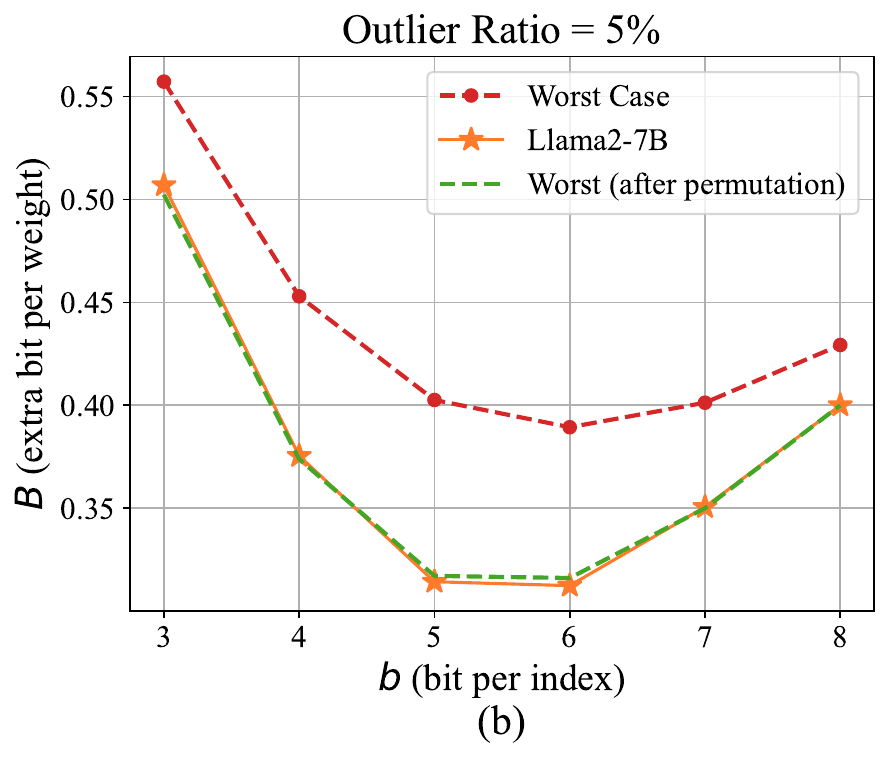}
    \vspace{-.1in}
    \caption{(a) \ours requires $B\approx0.31$ bits/weight with $b=6$ and $\gamma=5\%$; (b) \ours adapts to any outlier distributions with graceful change in storage overhead.}
    \label{fig:cost}
\end{figure}

 Lemma~\ref{lm:uniform} not only provides a theoretical guarantee on the efficiency of our index coding scheme, but also guides the optimal choice of $b$ with a given outlier ratio $\gamma$.
 In Figure~\ref{fig:cost} (a), we compare different choices of $b$ and the resulting storage overhead $B$ in three cases: (i) the upper bound calculated using Lemma~\ref{lm:uniform}, (ii) simulation using uniformly distributed outlier positions (synthetic) and (iii) empirical estimation using Llama2-7B. We notice that the three plots almost coincide, which confirms the uniform distribution observation and the tightness of our upper bound. Note that the convex behavior is due to the trade-off between two factors: (i) the base storage cost associated with each index; and (ii) the extra overhead caused by the index count accumulation of large gaps. Indeed, \ours supports a flexible range of outlier ratios that exhibit different trade-offs, with details in Appendix~\ref{app:idx_cost}.

In addition, although our index coding scheme is inspired by the uniform distribution property, it remains compatible with any outlier position distribution - the distribution only affects the storage efficiency. The worst case occurs, for example, when outliers cluster at the beginning and end of a weight channel, requiring additional flag values to encode the large gaps in between. Lemma~\ref{lm:worst} provides a universal upper bound on the total storage overhead to encode outlier positions, regardless of the underlying distribution. The proof, including details on the worst-case scenario, is provided in Appendix~\ref{app:proof2}.

\begin{restatable}{lemma}{lmworst}
\label{lm:worst}
For a weight $\vw \in \mathbb{R}^{d_{in}}$ with $\gamma d_{in}$ outliers, and for any distribution of outlier positions, if $b$ bits are used to encode each index gap, the total storage overhead $B$ (defined as the number of bit per weight to store the outlier positions) satisfies
\begin{equation*}
    B \le \left(\frac{(1-\gamma + \frac{1}{d_{in}})}{2^b - 1} + \gamma \right) b.
\end{equation*}
\end{restatable}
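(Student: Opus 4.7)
The plan is to bound the total number of $b$-bit entries used by the index-coding scheme as a sum of per-gap contributions, and then convert that count into bits-per-weight. Because the claim is distribution-free, the argument has to be purely combinatorial, using only the outlier count $p = \gamma d_{in}$ and the row length $d_{in}$, not the law of the outlier positions.

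First I would fix notation. Let the outliers sit at positions $i_1 < i_2 < \cdots < i_p$ and define inter-outlier gaps $g_k = i_k - i_{k-1}$ with $i_0 = 0$, so each $g_k \ge 1$ and $\sum_k g_k = i_p \le d_{in}$. From the description in Section~\ref{subsec:ic}, each appearance of the flag value $2^b$ accumulates $2^b - 1$ positions and at most one closing entry from $\{1,\ldots,2^b - 1\}$ records the remainder; consequently a gap of size $g$ consumes at most $\lceil g/(2^b - 1)\rceil$ entries of $b$ bits each. This reduces the problem to bounding $N := \sum_{k=1}^{p} \lceil g_k/(2^b-1)\rceil$ under a total-mass constraint.

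The main step is to aggregate the per-gap costs. I would apply the elementary inequality $\lceil g/(2^b-1)\rceil \le (g-1)/(2^b-1) + 1$, valid for every integer $g \ge 1$, and sum over the $p$ gaps. With one additional unit of boundary slack for $\sum_k g_k$ (to accommodate the fact that the encoded sequence may extend up to or past position $d_{in}$, as happens in the clustered worst case mentioned in the text), this gives $N \le \gamma d_{in} + ((1-\gamma)d_{in} + 1)/(2^b - 1)$. Multiplying by $b/d_{in}$ converts entries to bits per weight and yields the stated bound. Distribution-independence then comes for free: the chain of inequalities uses only the deterministic quantities $p$, $\sum_k g_k$, and $d_{in}$, so the bound applies regardless of where the outliers are placed; the text's ``outliers clustered at the two ends'' scenario is what makes the boundary slack tight.

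The main obstacle is exactly that boundary bookkeeping responsible for the $1/d_{in}$ term. A naive summation (interpreting $\sum_k g_k \le d_{in}$ strictly and using the same ceiling inequality) already yields the cleaner $B \le b[(1-\gamma)/(2^b-1) + \gamma]$, which is strictly sharper than the stated bound. Recovering precisely the extra $1/d_{in}$ slack requires committing to a single, explicit encoding convention, either counting a trailing sentinel entry or widening $\sum_k g_k$ to $d_{in}+1$, and carrying that convention consistently through the computation. Once the convention is pinned down, the rest is a one-line algebraic rearrangement.
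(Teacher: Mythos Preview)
Your approach is essentially the paper's: write each gap's cost as $\lceil g_k/(2^b-1)\rceil$ (equivalently the paper's $\gE_k+1$), sum, and use $\sum_k g_k\le d_{in}$ together with $\lceil g/(2^b-1)\rceil\le (g-1)/(2^b-1)+1$ to get $N\le (d_{in}-p)/(2^b-1)+p$. That part is fine and matches the paper exactly.

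Where you go off track is the diagnosis of the $1/d_{in}$ term. It does \emph{not} come from a sentinel entry or from widening $\sum_k g_k$ to $d_{in}+1$; the paper uses $\sum_k g_k = i_p \le d_{in}$ just as you do. The extra slack appears only when the integer outlier count $p=\lfloor\gamma d_{in}\rfloor$ is converted back to an expression in $\gamma$: one uses $d_{in}-\lfloor\gamma d_{in}\rfloor\le (1-\gamma)d_{in}+1$ for the first term and $\lfloor\gamma d_{in}\rfloor\le\gamma d_{in}$ for the second, and the ``$+1$'' from the floor is what becomes $1/d_{in}$ after dividing by $d_{in}$. So your observation that the sharper bound $b[(1-\gamma)/(2^b-1)+\gamma]$ already holds is correct \emph{if} $\gamma d_{in}$ is an integer; the stated form is simply the paper's way of covering the general case $p=\lfloor\gamma d_{in}\rfloor$ without assuming divisibility. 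Replace your ``boundary bookkeeping'' paragraph with this floor argument and the proof is complete.
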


In Figure~\ref{fig:cost} (b), we compare the worst-case storage cost against the empirical estimation using Llama2-7B. Although the worst-case distribution incurs more storage overhead, the difference is less than 0.1 bit per weight. Furthermore, when outliers are not uniformly distributed, we can perform a one-time random permutation as described in Appendix C.2, noting that such a permutation does not incur additional computational overhead. As shown in Figure~\ref{fig:cost} (b), this random permutation nearly eliminates the gap.

\section{Experiments}\label{sec:exp}
\textbf{Models and Datasets.}\\
We evaluate our method on the Llama2 and Llama3 family of models \citep{touvron2023llama2, grattafiori2024llama3}, across a wide range of scales from 1B to 70B parameters. Following recent PTQ work \citep{frantar2022gptq, chee2023quip, tseng2024quip2, malinovskii2024pv}, we report perplexity on WikiText-2 \citep{merity2016wiki} and C4 \citep{raffel2020c4} validation sets, as well as zero-shot accuracy on WinoGrande \citep{sakaguchi2021winogrande}, PiQA \citep{bisk2020piqa}, 
%HellaSwag (\cite{zellers2019hellaswag}), 
ARC-easy, and ARC-challenge \citep{clark2018arc}. The zero-shot accuracies are evaluated using LM Eval Harness v0.3.0 \citep{eval-harness}.

\textbf{Baseline Methods.}\\
We compare %our quantization framework 
against the state-of-the-art weight-only algorithms, including \textit{scalar quantization} algorithms, OmniQuant \citep{shao2023omniquant}, SqueezeLLM \citep{kim2023squeezellm} and QuIP \citep{chee2023quip}, as well as \textit{vector quantization} algorithms QuIP\# \citep{tseng2024quip2}, AQLM \citep{egiazarian2024aqlm}, and QTIP \citep{tseng2024qtip}. {Another recent work, PV-Tuning \citep{malinovskii2024pv}, builds on AQLM and focuses on optimizing the fine-tuning strategy. Since we do not perform any fine-tuning, we do not use it as a comparison in this paper. 
Note that fine-tuning can be done on top of our method and is left for future work.}

\textbf{Choice of Quantizers.}\\
As mentioned before, our proposed quantization framework ICQuant can be applied with any quantization scheme. We here implemented two simple \textit{scalar quantization} schemes that typically show significant performance degradation in extreme quantization regimes (2-3 bits per weight): (i) RTN: rounding-to-nearest uniform quantization and (ii) SK: sensitivity-aware K-means clustering \citep{kim2023squeezellm}. We refer to the overall quantization methods as ICQuant$^{\text{RTN}}$ and ICQuant$^{\text{SK}}$.

\subsection{Compare Outlier Suppression Techniques.}
To verify the effectiveness of ICQuant (described in Section~\ref{sec:method}) in outlier suppression, we compare it with the following techniques:
\begin{itemize}
    \item \textbf{Grouping}: Divide the weights into small continuous blocks and quantize each group separately.
    \item \textbf{Mixed-Precision}: Keep outliers in full precision (FP16). 
    \item \textbf{Incoherence Processing \citep{chee2023quip}}: Apply random orthogonal matrices to both sides of the weight matrices before quantization.
\end{itemize} 

\begin{figure}[!t]
    \centering
    \includegraphics[width=0.44\linewidth]{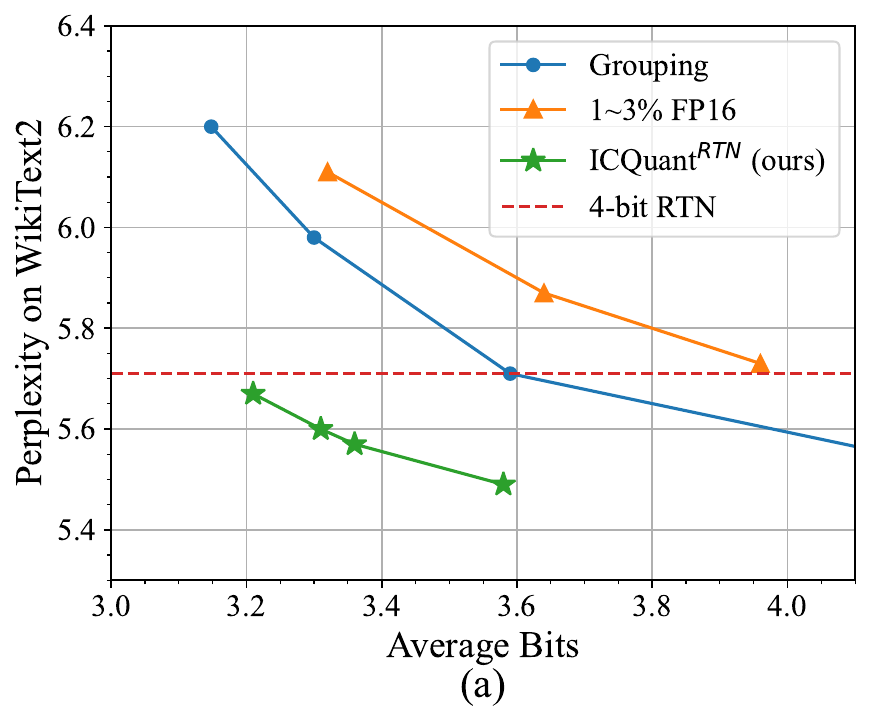}
    \includegraphics[width=0.48\linewidth]{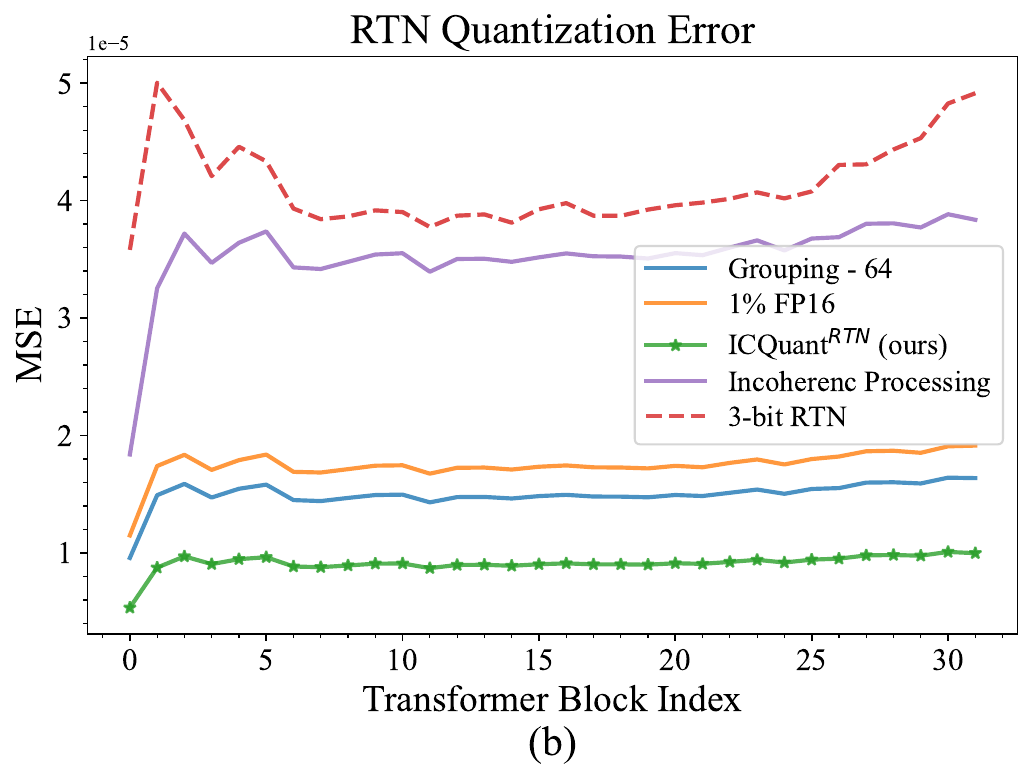}
    \vspace{-.1in}
    \caption{(a) Wikitext2 perplexity ($\downarrow$) and (b) the quantization error (MSE) of 3-bit rounding-to-nearest (RTN) uniform quantization using different outlier suppression techniques, where the MSE is averaged across all linear layers in each transformer block.}
    \label{fig:rtn}
\end{figure}
We first evaluate these outlier suppression techniques on 3-bit rounding-to-nearest (RTN) quantization. Figure~\ref{fig:rtn} (a) shows the Wikitext2 perplexity as a function of the average number of bits per weight, where we adjust the hyperparameters (e.g. group size, outlier ratio) to achieve the different bitrates. Although grouping, mixed-precision, and ICQuant all improve quantization quality at the cost of extra storage, ICQuant$^{\text{RTN}}$ demonstrates the best trade-off between performance and storage efficiency. Notably, ICQuant$^{\text{RTN}}$ surpasses 4-bit RTN performance with less than 3.2 bits per weight. In addition, we found that incoherence processing on weights alone offers minimal benefits in extreme quantization scenarios, with the resulting perplexity falling outside the range reported in Figure~\ref{fig:rtn} (a).

In Figure~\ref{fig:rtn} (b), we further compare the quantization error (i.e., $\|\gQ(w)-w\|_2^2$) of $\approx 3.3$-bit RTN quantization using different outlier suppression techniques. We adjust the hyperparameters for all methods (apart from incoherence processing) to share similar extra storage overhead. The results show that ICQuant$^{\text{RTN}}$ leads to consistently lower error across the whole model. Specifically, ICQuant$^{\text{RTN}}$ reduces the quantization error to approximately 1/4 of the baseline, which verifies our intention of halving the quantization range by separating 5\% outliers. Similarly, we found that incoherence processing yields little improvement. This is because when the weights already mostly follow a normal distribution, the random rotation does not effectively reduce the quantization range, as discussed in Appendix~\ref{app:IP}. 

Next, we compare (in Table~\ref{tab:2-bit_scalar}) the performance of ICQuant$^{\text{SK}}$ to the state-of-the-art scalar quantization algorithms supported by the aforementioned outlier suppression techniques in the 2-bit regime. In particular, ICQuant$^{\text{SK}}$ and SqueezeLLM use the same quantizer (i.e., the sensitivity-aware K-means clustering), but SqueezeLLM \citep{kim2023squeezellm} keeps outliers in FP16. OmniQuant \citep{shao2023omniquant} combines grouping with learnable clipping ranges. QuIP \citep{chee2023quip} applies incoherence processing and adaptive rounding. Table~\ref{tab:2-bit_scalar} shows that ICQuant$^{\text{SK}}$ significantly outperforms other algorithms in all models, with similar storage overhead.
\begin{table}
    \centering
    \resizebox{0.72\textwidth}{!}{
    \begin{tabular}{c|c|c|c|c|c|c}
        \toprule
         \multirow{2}{*}{Model} & \multirow{2}{*}{Method} & \multirow{2}{*}{bits} & \multicolumn{2}{c|}{(ctx. 2048)} & \multicolumn{2}{c}{(ctx. 4096)} \\
         & & & Wiki2$\downarrow$ & C4$\downarrow$ & Wiki2$\downarrow$ & C4$\downarrow$\\
         \midrule
         \myrowcolour%
         \cellcolor{white} \multirow{5}{*}{Llama2 - 7B} & FP16 & 16 & 5.47 & 6.97 & 5.12 & 6.63 \\
         & SqueezeLLM - 0.45\% & 2.2 & 10.79 & - & - & - \\
         & OmniQuant - g64 & 2.3 & 9.62 & 12.72 & - & - \\
         & QuIP & 2 & - & - & - & - \\
         \myrowcolour%
         \cellcolor{white} & ICQuant$^{\text{SK}}$ - 5\% & 2.3 & \textbf{7.21} & \textbf{8.97} & \textbf{6.75} & \textbf{8.84} \\
         \midrule
         \myrowcolour%
         \cellcolor{white} \multirow{5}{*}{Llama2 - 13B} & FP16 & 16 & 4.88 & 6.46 & 4.57 & 6.05 \\
         & SqueezeLLM - 0.45\% & 2.2 & 7.91 & - & - & -\\
         & OmniQuant - g64 & 2.3 & 7.56 & 10.05 & - & - \\
         & QuIP & 2 & - & - & 13.5 & 16.2 \\
         \myrowcolour%
         \cellcolor{white} & ICQuant$^{\text{SK}}$ - 5\% & 2.3 & \textbf{6.10} & \textbf{7.80} & \textbf{5.74} & \textbf{7.57} \\
         \midrule
         \myrowcolour%
         \cellcolor{white} \multirow{5}{*}{Llama2 - 70B} & FP16 & 16 & 3.31 & 5.52 & 3.12 & 4.97\\
         & SqueezeLLM - 0.45\% & 2.2 & 4.99 & - & - & -\\
         & OmniQuant - g64 & 2.3 & 6.11 & 7.88 & - & - \\
         & QuIP & 2 & - & - & 5.90 & 8.17 \\
         \myrowcolour%
         \cellcolor{white} & ICQuant$^{\text{SK}}$ - 5\% & 2.3 & \textbf{4.26} & \textbf{6.22} & \textbf{4.01} & \textbf{5.78} \\
         \bottomrule
    \end{tabular}
    }
    \caption{Wikitext2 and C4 perplexity ($\downarrow$) of all Llama2 models quantized to 2-bit regime using \textit{scalar quantization} algorithms.
    We use different context lengths (ctx.) to match the settings used in their original papers; blank entries indicate results that were not reported by the original authors.}
    \vspace{-.1in}
    \label{tab:2-bit_scalar}
\end{table}

\subsection{Compare with SoTA weight-only PTQ}
Finally, we conduct a comprehensive evaluation of ICQuant$^{\text{SK}}$ on different models and bitrates, comparing our results with the best-known baselines (AQLM, QuIP\#, and QTIP) in weight-only PTQ. It is worth noting that all these baseline methods use complex vector quantization schemes and fine-tuning during/after quantization. {Vector quantization typically involves significantly higher complexity in both quantization and inference, while fine-tuning requires substantial time and computational resources}. In contrast, ICQuant$^{\text{SK}}$ aims to achieve their performance levels using a simple scalar quantizer with much less calibration data and no fine-tuning. The ablation study on the choice of outlier ratio and the improvements on the vanilla SK quantizer can be found in Appendix~\ref{app:abla}. More implementation details are given in Appendix~\ref{app:implement}.

Table~\ref{tab:results_L3_70} shows the perplexity and zero-shot accuracy of quantized Llama3-70B models (results of other Llama3 models are provided in Appendix~\ref{app:other_results}.). Since baseline models only report results with fine-tuning, we compare against their fine-tuned versions. Llama3-70B is known to be more challenging to quantize than Llama2 models \citep{huang2024good}, with state-of-the-art methods (QuIP\# and QTIP) showing significant performance degradation in downstream tasks, even after fine-tuning. Remarkably, ICQuant$^{\text{SK}}$ outperforms these methods by a large margin.
For example, with 0.31 bits extra overhead, ICQuant$^{\text{SK}}$ improves QTIP's 2-bit accuracy on ARC-challenge from 28\% to 50.7\% and on ARC-easy from 35.3\% to 81.7\%, even surpassing QTIP's 3-bit performance. Interestingly, while QuIP\# and QTIP achieve low perplexity, they perform poorly on zero-shot reasoning tasks, likely due to overfitting during fine-tuning. Moreover, ICQuant$^{\text{SK}}$ even achieves comparable zero-shot accuracies to PV-tuning, which fine-tunes vector quantization with an improved but more expensive fine-tuning strategy with complexity comparable to QAT.

Table~\ref{tab:results_L2_ppl} compares the perplexity of quantized Llama2 families. Overall, ICQuant$^{\text{SK}}$ achieves the best performance without fine-tuning. Compared to fine-tuned baselines, ICQuant$^{\text{SK}}$ consistently outperforms AQLM, reaching results comparable to QuIP\# and QTIP. In addition, we observe that separating more outliers (e.g., 8.25\%) improves the performance by further shrinking the range of inliers while quantizing more coarsely the outliers. This improvement occurs not only because of the larger proportion of inliers, but also because the inliers tend to be more important (see Appendix~\ref{app:importance}). We report the evaluation on zero-shot reasoning tasks in Appendix~\ref{app:other_results}.

\begin{table}[ht]
    \centering
    \resizebox{0.8\textwidth}{!}{
    \begin{tabular}{c|c|c|c|c|c|c|c}
        \toprule
         \multirow{2}{*}{Method} & \multirow{2}{*}{bits} & \multicolumn{6}{c}{Llama3 - 70B} \\
         & & Wiki2$\downarrow$ & C4$\downarrow$ & ArcC$\uparrow$ & ArcE$\uparrow$ & PiQA$\uparrow$ & Wino$\uparrow$ \\
         \midrule
         \myrowcolour%
          BP16 & 16 & 2.59 & 5.78 & 60.2 & 86.9 & 82.3 & 80.6 \\
         %\cmidrule(l){2-9}
         QuIP\# & 4.0 & [2.99] & [5.96] & [35.0] & [67.3] & [71.9] & [76.7]\\
         QTIP & 4.0 & [2.75] & [5.83] & [56.1] & [83.9] & [81.3] & [80.6]\\
         \myrowcolour%
         ICQuant$^{\text{SK}}$-5\% & 4.3 & \textbf{2.72} & \textbf{5.83} & \textbf{59.6} & \textbf{87.2} & \textbf{82.6} & 80.4 \\
         \midrule
         QuIP\# & 3.0  & [3.59] & [6.18] & [31.1] & [36.6] & [58.8] & [76.4]\\
         QTIP & 3.0 &  [3.18] & [5.98] & [48.6] & [77.8] & [77.8] & [79.7]\\
         \myrowcolour%
         ICQuant$^{\text{SK}}$-5\% & 3.3 & 3.24 & 6.03 & \textbf{56.6} & \textbf{84.9} & \textbf{81.9} & \textbf{80.0} \\
          \midrule
         QuIP\# & 2.0 & [5.77] & [7.46] & [18.3] & [32.2] & [54.7] & [68.9] \\
         QTIP & 2.0 & [4.97] & [6.80] & [28.0] & [35.2] & [57.1] & [72.6] \\
         \myrowcolour%
         ICQuant$^{\text{SK}}$-8.25\% & 2.4 & 5.83 & 8.22 & \textbf{52.0} & \textbf{82.3} & \textbf{79.7} & 74.0 \\
         \myrowcolour%
         ICQuant$^{\text{SK}}$-5\% & 2.3 & 5.65 & 7.64 & 50.7 & \textbf{81.7} & \textbf{79.4} & 75.5\\
         PV-tuning & 2.1 & [4.55] & [6.54] & [50.8] & [80.2] & [79.2] & [78.1] \\
         \midrule
    \end{tabular}
    }
    \vspace{-.1in}
    \caption{Perplexity ($\downarrow$) and zero-shot accuracy ($\uparrow$) of Llama3-70B models (context length = 8192), quantized to 2-4 bit regime, using \textit{vector quantization} algorithms and ICQuant$^{\text{SK}}$ (\textit{scalar quantization}), where the values after fine-tuning are wrapped in $[\cdot]$. We highlight the cases where ICQuant$^{\text{SK}}$ outperforms all fine-tuned baselines.}
    \label{tab:results_L3_70}
\end{table}

\begin{table}[th]
    \centering
    \resizebox{0.95\textwidth}{!}{
    \begin{tabular}{c|c|c|c|c|c|c|c}
        \toprule
         \multirow{2}{*}{Method} & \multirow{2}{*}{bits} & \multicolumn{2}{c|}{Llama2 - 7B} & \multicolumn{2}{c|}{Llama2 - 13B} & \multicolumn{2}{c}{Llama2 - 70B}\\
         & & Wiki2$\downarrow$ & C4$\downarrow$ & Wiki2$\downarrow$ & C4$\downarrow$ & Wiki2$\downarrow$ & C4$\downarrow$ \\
         \midrule
         \myrowcolour%
         FP16 & 16 & 5.12 & 6.63 & 4.57 & 6.05 & 3.12 & 4.97 \\
         %\cmidrule(l){2-9}
         AQLM & 4.0 & - {\small[5.21]} & - {\small[6.75]} & - {\small[4.65]} & - {\small[6.14]} & - {\small[3.19]} & - {\small[5.03]} \\
         QuIP\# & 4.0 & 5.22 {\small[5.19]} & 6.79 {\small[6.75]} & 4.65 {\small[4.63]} & 6.15 {\small[6.13]} & 3.18 {\small[3.18]} & 5.02 {\small[5.02]} \\
         QTIP & 4.0 & 5.17 {\small[5.17]} & 6.71 {\small[6.69]} & 4.62 {\small[4.61]} & 6.10 {\small[6.09]} & \textbf{3.16} {\small[3.16]} & \textbf{5.00} {\small[5.00]}\\
         \myrowcolour%
         ICQuant$^{\text{SK}}$-5\% & 4.3 & \textbf{5.17} & \textbf{6.70} & \textbf{4.61} & \textbf{6.09} & \textbf{3.16} & \textbf{5.00} \\
          \midrule
         AQLM & 3.0 & - {\small[5.46]} & - {\small[7.08]} & - {\small[4.82]} & - {\small[6.37]} & - {\small[3.36]} & - {\small[5.17]} \\
         QuIP\# & 3.0 & 5.60 {\small[5.41]} & 7.34 {\small[7.04]} & 4.90 {\small[4.78]} & 6.50 {\small[6.35]} & 3.41 {\small[3.35]} & 5.20 {\small[5.15]} \\
         QTIP & 3.0 & 5.38 {\small[5.28]} & 6.99 {\small[6.87]} & \textbf{4.74} {\small[4.69]} & 6.28 {\small[6.22]} & \textbf{3.27} {\small[3.26]} & \textbf{5.09} {\small[5.08]} \\
          \myrowcolour%
         ICQuant$^{\text{SK}}$-5\% & 3.3 & \textbf{5.35} & \textbf{6.95} & 4.75 & \textbf{6.26} & 3.28 & 5.10 \\
          \midrule
         AQLM & 2.0 & - {\small[6.59]} & - {\small[8.54]} & - {\small[5.60]} & -{\small[7.49]} & - {\small[3.94]}& - {\small[5.72]} \\
         QuIP\# & 2.0 & 8.22 {\small[6.19]} & 11.0 {\small[8.16]} & 6.60 {\small[5.35]} & 8.07 {\small[7.20]} & 4.16 {\small[3.91]} & 6.01 {\small[5.71]} \\
         QTIP & 2.0 & 6.82 {\small[5.86]} & 8.96 {\small[7.73]} & \textbf{5.52} {\small[5.11]} & 7.39 {\small[6.85]} & 3.87 {\small[3.70]} & 5.70 {\small[5.48]} \\
         \myrowcolour%
         ICQuant$^{\text{SK}}$-8.25\% & 2.4 & \textbf{6.35} & \textbf{8.25} & 5.54 & \textbf{7.25} & \textbf{3.86} & \textbf{5.61} \\
         \myrowcolour%
         ICQuant$^{\text{SK}}$-5\% & 2.3 & 6.75 & 8.84 & 5.74 & 7.57 & 4.01 & 5.78 \\
         \midrule
    \end{tabular}
    }
    \vspace{-.1in}
    \caption{Wikitext2 and C4 perplexity ($\downarrow$) of all Llama2 models (context length = 4096) quantized to 2-4 bit regime, using \textit{vector quantization} algorithms and ICQuant$^{\text{SK}}$ (\textit{scalar quantization}), where the perplexity values after fine-tuning are wrapped in $[\cdot]$. We highlight in each case the best result without fine-tuning.}
    \vspace{-.1in}
    \label{tab:results_L2_ppl}
\end{table}

 \subsection{Inference Efficiency}
Our outlier index decoding and weight dequantization procedure can be efficiently fused with matrix multiplication in a single kernel, thereby minimizing costly memory transactions. Alternatively, to fully accelerate the inference, outlier indices can be pre-decoded into a compact bitmask before the layer is executed. This offers flexibility depending on deployment constraints and latency requirements. To demonstrate the practical efficiency of ICQuant, we implemented an example 2-bit matrix-vector multiplication kernel, building on the open-sourced SqueezeLLM kernel. Our implementation supports on-the-fly decoding and de-quantization. In Table~\ref{tab:latency},  we benchmark the matrix-vector multiplication subroutine on a Llama2-7B linear layer, using an Nvidia RTX 4090 GPU. The result shows that ICQuant can deliver faster inference than all baseline methods.

\begin{wraptable}{r}{0.38\textwidth}
    \centering
    \vspace{-.12in} % slight adjustment
    \resizebox{0.38\textwidth}{!}{
    \begin{tabular}{c|c}
        \toprule
         Method & CUDA Latency \\
         \midrule
         \myrowcolour%
         FP16 & 96.32 us \\
         \midrule
         AQLM & 86.14 us \\ 
         QuIP\# & 31.28 us \\
         QTIP & 32.92 us \\
         \myrowcolour%
         ICQuant$^{\text{SK}}$-5\% & 23.29 us\\
         \bottomrule
    \end{tabular}
    }
    \vspace{-.1in} % slight adjustment
    \caption{Latency of gate\_proj layer in 2-bit Llama2-7B model.}
    \vspace{-.05in} % slight adjustment
    \label{tab:latency}
\end{wraptable}

Although ICQuant requires additional decoding steps during inference, the overall latency remains lower than the baseline methods at comparable accuracy levels. Note that state-of-the-art baselines have non-trivial overhead: structured vector quantization schemes like AQLM require loading from a codebook that is too large to fit in cache, while QTIP and QuIP\# require additional matrix multiplications and codebook generation.
In addition, our example kernel serves as a proof-of-concept to demonstrate feasibility rather than peak performance. For instance, our current implementation uses FP32 arithmetic, rather than FP16 (as used in the baselines), which does not fully leverage the tensor core acceleration available on modern GPUs.

\section{Other Related Work}\label{sec:related}
% \subsection{Post Training Quantization for LLMs}
% Post-training quantization (PTQ) has emerged as a crucial technique for large language models (LLMs), building upon pioneering research in neural network quantization (\cite{han2015deep,7178146}). The unprecedented scale of LLMs has intensified the need for scalable low-bit quantization methods. Since Quantization-Aware Training (QAT) is computationally expensive and time-consuming because of the retraining requirements, PTQ has become the preferred approach for LLM quantization. Due to the memory-bound nature \cite{} of most LLMs, weight-only quantization provides substantial improvements in both memory efficiency and inference speed. However, existing solutions still suffer from performance degradation in extreme compression regime ($< 4$ bits per weight).

%There are two main categories of model quantization: post-training quantization (PTQ) and quantization-aware training (QAT). Since QAT (\cite{ma2024era}) is computationally expensive and time-consuming due to the retraining requirements, PTQ has become the preferred approach for LLMs. Quantization algorithms can also be classified by their target objects: weight-only quantization, weight-activation quantization (\citep{xiao2023smoothquant,liu2024spinquant}), and KV-cache quantization (\citep{liu2024kivi,hooper2024kvquant}).
Model quantization has two main categories: post-training quantization (PTQ) and quantization-aware training (QAT) \citep{ma2024era}, depending on whether retraining is needed. Quantization can target weight-only, weight-activation quantization \citep{xiao2023smoothquant,liu2024spinquant}, and KV-cache \citep{liu2024kivi,hooper2024kvquant}. In this paper, we focus on weight-only PTQ. Initial research efforts focused on improving uniform quantization through innovative approaches. Notable contributions include error compensation introduced in GPTQ \citep{frantar2022gptq} and equivalent scale transformations 
%for scaling salient weights 
proposed in AWQ \citep{lin2024awq}. Subsequently, \cite{qlora} and \cite{kim2023squeezellm} advanced the field using non-uniform quantization techniques, taking into account the weight sensitivity and statistical properties. Contemporary state-of-the-art methods Quip\# \citep{tseng2024quip2}, AQLM \citep{egiazarian2024aqlm}, QTIP \citep{tseng2024qtip} have achieved remarkable results with 2-bit models using vector quantization and fine-tuning. Furthermore, PV-Tuning \citep{malinovskii2024pv} explored a new quantization-aware fine-tuning strategy for LLMs. Nevertheless, it is important to note that applying fine-tuning risks overfitting the calibration set and diminishes the efficiency advantage of PTQ over QAT.

\section{Conclusions}\label{sec:concl}
In this paper, we present \ours, a novel method to decrease quantization range by separately quantizing outliers and efficiently encoding their indices — a method applicable to any quantization scheme. We demonstrate its effectiveness in improving weight quantization quality in extreme compression regimes ($\le4$ bits), achieving state-of-the-art performance using a simple scalar quantizer and fixed outlier ratio. Looking ahead, promising directions include jointly optimizing the outlier ratio and storage overhead by leveraging layer-specific statistics, combining ICQuant with advanced quantization schemes and fine-tuning procedures, and extending the technique to quantize other components such as activations and KV-caches. Overall, \ours provides a flexible and highly effective mechanism for improving low-bit weight quantization, paving the way for more efficient deployment of LLMs under tight resource constraints.

%\section*{Author Contributions}
%If you'd like to, you may include a section for author contributions as is done in many journals. This is optional and at the discretion of the authors.

\section*{Acknowledgments}
This work was supported in part by NSF grants \#2221871, \#2139304, \#2146838, and the Army Research Laboratory grant under Cooperative Agreement W911NF-17-2-0196.
%Use unnumbered first-level headings for the acknowledgments. All acknowledgments, including those to funding agencies, go at the end of the paper.

%\section*{Ethics Statement}
%Authors can add an optional ethics statement to the paper. For papers that touch on ethical issues, this section will be evaluated as part of the review process. The ethics statement should come at the end of the paper. It does not count toward the page limit, but should not be more than 1 page. 

\bibliography{main}

\begin{thebibliography}{34}
\providecommand{\natexlab}[1]{#1}
\providecommand{\url}[1]{\texttt{#1}}
\expandafter\ifx\csname urlstyle\endcsname\relax
  \providecommand{\doi}[1]{doi: #1}\else
  \providecommand{\doi}{doi: \begingroup \urlstyle{rm}\Url}\fi

\bibitem[Bisk et~al.(2020)Bisk, Zellers, Gao, Choi, et~al.]{bisk2020piqa}
Yonatan Bisk, Rowan Zellers, Jianfeng Gao, Yejin Choi, et~al.
\newblock Piqa: Reasoning about physical commonsense in natural language.
\newblock In \emph{Proceedings of the AAAI conference on artificial
  intelligence}, volume~34, pp.\  7432--7439, 2020.

\bibitem[Chee et~al.(2023)Chee, Cai, Kuleshov, and De~Sa]{chee2023quip}
Jerry Chee, Yaohui Cai, Volodymyr Kuleshov, and Christopher~M De~Sa.
\newblock Quip: 2-bit quantization of large language models with guarantees.
\newblock \emph{Advances in Neural Information Processing Systems},
  36:\penalty0 4396--4429, 2023.

\bibitem[Clark et~al.(2018)Clark, Cowhey, Etzioni, Khot, Sabharwal, Schoenick,
  and Tafjord]{clark2018arc}
Peter Clark, Isaac Cowhey, Oren Etzioni, Tushar Khot, Ashish Sabharwal, Carissa
  Schoenick, and Oyvind Tafjord.
\newblock Think you have solved question answering? try arc, the ai2 reasoning
  challenge.
\newblock \emph{arXiv preprint arXiv:1803.05457}, 2018.

\bibitem[Dettmers et~al.(2022)Dettmers, Lewis, Belkada, and
  Zettlemoyer]{dettmers2022gpt3int8}
Tim Dettmers, Mike Lewis, Younes Belkada, and Luke Zettlemoyer.
\newblock Gpt3. int8 (): 8-bit matrix multiplication for transformers at scale.
\newblock \emph{Advances in neural information processing systems},
  35:\penalty0 30318--30332, 2022.

\bibitem[Dettmers et~al.(2023)Dettmers, Pagnoni, Holtzman, and
  Zettlemoyer]{qlora}
Tim Dettmers, Artidoro Pagnoni, Ari Holtzman, and Luke Zettlemoyer.
\newblock Qlora: Efficient finetuning of quantized llms.
\newblock \emph{Advances in neural information processing systems},
  36:\penalty0 10088--10115, 2023.

\bibitem[Dettmers et~al.(2024)Dettmers, Svirschevski, Egiazarian, Kuznedelev,
  Frantar, Ashkboos, Borzunov, Hoefler, and Alistarh]{dettmers2024spqr}
Tim Dettmers, Ruslan Svirschevski, Vage Egiazarian, Denis Kuznedelev, Elias
  Frantar, Saleh Ashkboos, Alexander Borzunov, Torsten Hoefler, and Dan
  Alistarh.
\newblock Spqr: A sparse-quantized representation for near-lossless llm weight
  compression.
\newblock In \emph{ICLR}, 2024.
\newblock URL \url{https://openreview.net/forum?id=Q1u25ahSuy}.

\bibitem[Du et~al.(2024)Du, Zhang, Cao, Guo, Cao, Chu, and
  Xu]{du2024bitdistiller}
Dayou Du, Yijia Zhang, Shijie Cao, Jiaqi Guo, Ting Cao, Xiaowen Chu, and Ningyi
  Xu.
\newblock Bitdistiller: Unleashing the potential of sub-4-bit llms via
  self-distillation.
\newblock \emph{arXiv preprint arXiv:2402.10631}, 2024.

\bibitem[Egiazarian et~al.(2024)Egiazarian, Panferov, Kuznedelev, Frantar,
  Babenko, and Alistarh]{egiazarian2024aqlm}
Vage Egiazarian, Andrei Panferov, Denis Kuznedelev, Elias Frantar, Artem
  Babenko, and Dan Alistarh.
\newblock Extreme compression of large language models via additive
  quantization.
\newblock \emph{arXiv preprint arXiv:2401.06118}, 2024.

\bibitem[Frantar et~al.(2022)Frantar, Ashkboos, Hoefler, and
  Alistarh]{frantar2022gptq}
Elias Frantar, Saleh Ashkboos, Torsten Hoefler, and Dan Alistarh.
\newblock Gptq: Accurate post-training quantization for generative pre-trained
  transformers.
\newblock \emph{arXiv preprint arXiv:2210.17323}, 2022.

\bibitem[Gao et~al.(2024)Gao, Tow, Abbasi, Biderman, Black, DiPofi, Foster,
  Golding, Hsu, Le~Noac'h, Li, McDonell, Muennighoff, Ociepa, Phang, Reynolds,
  Schoelkopf, Skowron, Sutawika, Tang, Thite, Wang, Wang, and
  Zou]{eval-harness}
Leo Gao, Jonathan Tow, Baber Abbasi, Stella Biderman, Sid Black, Anthony
  DiPofi, Charles Foster, Laurence Golding, Jeffrey Hsu, Alain Le~Noac'h,
  Haonan Li, Kyle McDonell, Niklas Muennighoff, Chris Ociepa, Jason Phang,
  Laria Reynolds, Hailey Schoelkopf, Aviya Skowron, Lintang Sutawika, Eric
  Tang, Anish Thite, Ben Wang, Kevin Wang, and Andy Zou.
\newblock A framework for few-shot language model evaluation, 07 2024.
\newblock URL \url{https://zenodo.org/records/12608602}.

\bibitem[Glorot \& Bengio(2010)Glorot and Bengio]{glorot2010understanding}
Xavier Glorot and Yoshua Bengio.
\newblock Understanding the difficulty of training deep feedforward neural
  networks.
\newblock In \emph{Proceedings of the thirteenth international conference on
  artificial intelligence and statistics}, pp.\  249--256. JMLR Workshop and
  Conference Proceedings, 2010.

\bibitem[Grattafiori et~al.(2024)Grattafiori, Dubey, Jauhri, Pandey, Kadian,
  Al-Dahle, Letman, Mathur, Schelten, Vaughan, et~al.]{grattafiori2024llama3}
Aaron Grattafiori, Abhimanyu Dubey, Abhinav Jauhri, Abhinav Pandey, Abhishek
  Kadian, Ahmad Al-Dahle, Aiesha Letman, Akhil Mathur, Alan Schelten, Alex
  Vaughan, et~al.
\newblock The llama 3 herd of models.
\newblock \emph{arXiv preprint arXiv:2407.21783}, 2024.

\bibitem[Hooper et~al.(2024)Hooper, Kim, Mohammadzadeh, Mahoney, Shao, Keutzer,
  and Gholami]{hooper2024kvquant}
Coleman Hooper, Sehoon Kim, Hiva Mohammadzadeh, Michael~W Mahoney, Sophia Shao,
  Kurt Keutzer, and Amir Gholami.
\newblock Kvquant: Towards 10 million context length llm inference with kv
  cache quantization.
\newblock \emph{Advances in Neural Information Processing Systems},
  37:\penalty0 1270--1303, 2024.

\bibitem[Huang et~al.(2024)Huang, Ma, Qin, Zheng, Lv, Chen, Luo, Qi, Liu, and
  Magno]{huang2024good}
Wei Huang, Xudong Ma, Haotong Qin, Xingyu Zheng, Chengtao Lv, Hong Chen, Jie
  Luo, Xiaojuan Qi, Xianglong Liu, and Michele Magno.
\newblock How good are low-bit quantized llama3 models? an empirical study.
\newblock \emph{arXiv e-prints}, pp.\  arXiv--2404, 2024.

\bibitem[Kaplan et~al.(2020)Kaplan, McCandlish, Henighan, Brown, Chess, Child,
  Gray, Radford, Wu, and Amodei]{kaplan2020scaling}
Jared Kaplan, Sam McCandlish, Tom Henighan, Tom~B Brown, Benjamin Chess, Rewon
  Child, Scott Gray, Alec Radford, Jeffrey Wu, and Dario Amodei.
\newblock Scaling laws for neural language models.
\newblock \emph{arXiv preprint arXiv:2001.08361}, 2020.

\bibitem[Kim et~al.(2023)Kim, Hooper, Gholami, Dong, Li, Shen, Mahoney, and
  Keutzer]{kim2023squeezellm}
Sehoon Kim, Coleman Hooper, Amir Gholami, Zhen Dong, Xiuyu Li, Sheng Shen,
  Michael~W Mahoney, and Kurt Keutzer.
\newblock Squeezellm: Dense-and-sparse quantization.
\newblock \emph{arXiv preprint arXiv:2306.07629}, 2023.

\bibitem[Lin et~al.(2024)Lin, Tang, Tang, Yang, Chen, Wang, Xiao, Dang, Gan,
  and Han]{lin2024awq}
Ji~Lin, Jiaming Tang, Haotian Tang, Shang Yang, Wei-Ming Chen, Wei-Chen Wang,
  Guangxuan Xiao, Xingyu Dang, Chuang Gan, and Song Han.
\newblock Awq: Activation-aware weight quantization for on-device llm
  compression and acceleration.
\newblock \emph{Proceedings of Machine Learning and Systems}, 6:\penalty0
  87--100, 2024.

\bibitem[Liu et~al.(2024{\natexlab{a}})Liu, Zhao, Fedorov, Soran, Choudhary,
  Krishnamoorthi, Chandra, Tian, and Blankevoort]{liu2024spinquant}
Zechun Liu, Changsheng Zhao, Igor Fedorov, Bilge Soran, Dhruv Choudhary,
  Raghuraman Krishnamoorthi, Vikas Chandra, Yuandong Tian, and Tijmen
  Blankevoort.
\newblock Spinquant: Llm quantization with learned rotations.
\newblock \emph{arXiv preprint arXiv:2405.16406}, 2024{\natexlab{a}}.

\bibitem[Liu et~al.(2024{\natexlab{b}})Liu, Yuan, Jin, Zhong, Xu, Braverman,
  Chen, and Hu]{liu2024kivi}
Zirui Liu, Jiayi Yuan, Hongye Jin, Shaochen Zhong, Zhaozhuo Xu, Vladimir
  Braverman, Beidi Chen, and Xia Hu.
\newblock Kivi: A tuning-free asymmetric 2bit quantization for kv cache.
\newblock \emph{arXiv preprint arXiv:2402.02750}, 2024{\natexlab{b}}.

\bibitem[Ma et~al.(2024)Ma, Wang, Ma, Wang, Wang, Huang, Dong, Wang, Xue, and
  Wei]{ma2024era}
Shuming Ma, Hongyu Wang, Lingxiao Ma, Lei Wang, Wenhui Wang, Shaohan Huang,
  Lifeng Dong, Ruiping Wang, Jilong Xue, and Furu Wei.
\newblock The era of 1-bit llms: All large language models are in 1.58 bits.
\newblock \emph{arXiv preprint arXiv:2402.17764}, 1, 2024.

\bibitem[Malinovskii et~al.(2024)Malinovskii, Mazur, Ilin, Kuznedelev,
  Burlachenko, Yi, Alistarh, and Richtarik]{malinovskii2024pv}
Vladimir Malinovskii, Denis Mazur, Ivan Ilin, Denis Kuznedelev, Konstantin
  Burlachenko, Kai Yi, Dan Alistarh, and Peter Richtarik.
\newblock Pv-tuning: Beyond straight-through estimation for extreme llm
  compression.
\newblock \emph{Advances in Neural Information Processing Systems},
  37:\penalty0 5074--5121, 2024.

\bibitem[Merity et~al.(2016)Merity, Xiong, Bradbury, and
  Socher]{merity2016wiki}
Stephen Merity, Caiming Xiong, James Bradbury, and Richard Socher.
\newblock Pointer sentinel mixture models, 2016.

\bibitem[Meta(2025)]{meta2025llama}
AI~Meta.
\newblock The llama 4 herd: The beginning of a new era of natively multimodal
  ai innovation.
\newblock \emph{https://ai. meta. com/blog/llama-4-multimodal-intelligence/,
  checked on}, 4\penalty0 (7):\penalty0 2025, 2025.

\bibitem[Pearson(1900)]{pearson1900x}
Karl Pearson.
\newblock X. on the criterion that a given system of deviations from the
  probable in the case of a correlated system of variables is such that it can
  be reasonably supposed to have arisen from random sampling.
\newblock \emph{The London, Edinburgh, and Dublin Philosophical Magazine and
  Journal of Science}, 50\penalty0 (302):\penalty0 157--175, 1900.

\bibitem[Raffel et~al.(2020)Raffel, Shazeer, Roberts, Lee, Narang, Matena,
  Zhou, Li, and Liu]{raffel2020c4}
Colin Raffel, Noam Shazeer, Adam Roberts, Katherine Lee, Sharan Narang, Michael
  Matena, Yanqi Zhou, Wei Li, and Peter~J Liu.
\newblock Exploring the limits of transfer learning with a unified text-to-text
  transformer.
\newblock \emph{Journal of machine learning research}, 21\penalty0
  (140):\penalty0 1--67, 2020.

\bibitem[Sakaguchi et~al.(2021)Sakaguchi, Bras, Bhagavatula, and
  Choi]{sakaguchi2021winogrande}
Keisuke Sakaguchi, Ronan~Le Bras, Chandra Bhagavatula, and Yejin Choi.
\newblock Winogrande: An adversarial winograd schema challenge at scale.
\newblock \emph{Communications of the ACM}, 64\penalty0 (9):\penalty0 99--106,
  2021.

\bibitem[Shao et~al.(2023)Shao, Chen, Zhang, Xu, Zhao, Li, Zhang, Gao, Qiao,
  and Luo]{shao2023omniquant}
Wenqi Shao, Mengzhao Chen, Zhaoyang Zhang, Peng Xu, Lirui Zhao, Zhiqian Li,
  Kaipeng Zhang, Peng Gao, Yu~Qiao, and Ping Luo.
\newblock Omniquant: Omnidirectionally calibrated quantization for large
  language models.
\newblock \emph{arXiv preprint arXiv:2308.13137}, 2023.

\bibitem[Sun et~al.(2024)Sun, Chen, Kolter, and Liu]{sun2024massive}
Mingjie Sun, Xinlei Chen, J~Zico Kolter, and Zhuang Liu.
\newblock Massive activations in large language models.
\newblock \emph{arXiv preprint arXiv:2402.17762}, 2024.

\bibitem[Touvron et~al.(2023)Touvron, Martin, Stone, Albert, Almahairi, Babaei,
  Bashlykov, Batra, Bhargava, Bhosale, et~al.]{touvron2023llama2}
Hugo Touvron, Louis Martin, Kevin Stone, Peter Albert, Amjad Almahairi, Yasmine
  Babaei, Nikolay Bashlykov, Soumya Batra, Prajjwal Bhargava, Shruti Bhosale,
  et~al.
\newblock Llama 2: Open foundation and fine-tuned chat models.
\newblock \emph{arXiv preprint arXiv:2307.09288}, 2023.

\bibitem[Tseng et~al.(2024{\natexlab{a}})Tseng, Chee, Sun, Kuleshov, and
  De~Sa]{tseng2024quip2}
Albert Tseng, Jerry Chee, Qingyao Sun, Volodymyr Kuleshov, and Christopher
  De~Sa.
\newblock Quip\#: Even better llm quantization with hadamard incoherence and
  lattice codebooks.
\newblock \emph{arXiv preprint arXiv:2402.04396}, 2024{\natexlab{a}}.

\bibitem[Tseng et~al.(2024{\natexlab{b}})Tseng, Sun, Hou, and
  De~Sa]{tseng2024qtip}
Albert Tseng, Qingyao Sun, David Hou, and Christopher~M De~Sa.
\newblock Qtip: Quantization with trellises and incoherence processing.
\newblock \emph{Advances in Neural Information Processing Systems},
  37:\penalty0 59597--59620, 2024{\natexlab{b}}.

\bibitem[Xiao et~al.(2023)Xiao, Lin, Seznec, Wu, Demouth, and
  Han]{xiao2023smoothquant}
Guangxuan Xiao, Ji~Lin, Mickael Seznec, Hao Wu, Julien Demouth, and Song Han.
\newblock Smoothquant: Accurate and efficient post-training quantization for
  large language models.
\newblock In \emph{International Conference on Machine Learning}, pp.\
  38087--38099. PMLR, 2023.

\bibitem[Yang et~al.(2024)Yang, Yang, Zhang, Hui, Zheng, Yu, Li, Liu, Huang,
  Wei, et~al.]{yang2024qwen2}
An~Yang, Baosong Yang, Beichen Zhang, Binyuan Hui, Bo~Zheng, Bowen Yu,
  Chengyuan Li, Dayiheng Liu, Fei Huang, Haoran Wei, et~al.
\newblock Qwen2. 5 technical report.
\newblock \emph{arXiv preprint arXiv:2412.15115}, 2024.

\bibitem[Zhu et~al.(2024)Zhu, Li, Liu, Ma, and Wang]{zhu2024survey}
Xunyu Zhu, Jian Li, Yong Liu, Can Ma, and Weiping Wang.
\newblock A survey on model compression for large language models.
\newblock \emph{Transactions of the Association for Computational Linguistics},
  12:\penalty0 1556--1577, 2024.

\end{thebibliography}
\bibliographystyle{colm2025_conference}

\newpage
\appendix
\section*{Appendix}
In this appendix, we provide further details as follows:
\begin{itemize}
\renewcommand{\labelitemi}{}
    \item \ref{app:proof}. \hyperref[app:proof]{Proofs of Lemmas}
    \item \ref{app:distr}. \hyperref[app:distr]{Range of Outliers}
    \item \ref{app:uniform}. \hyperref[app:uniform]{Uniform Distribution of Outlier Positions}
    \item \ref{app:idx_cost}. \hyperref[app:idx_cost]{Index Storage Cost Analysis}
    \item \ref{app:implement}. \hyperref[app:implement]{Implementation Details}
    \item \ref{app:abla}. \hyperref[app:abla]{Ablation Study}
    \item \ref{app:other_results}. \hyperref[app:other_results]{Other Experiment Results}
    \item \ref{app:other_obs}. \hyperref[app:other_obs]{Other Observations}
    
\end{itemize}
\section{Proofs of Lemmas}\label{app:proof}
\subsection{Proof of Lemma~\ref{lm:uniform}}\label{app:proof1}
\lmuni*
\begin{proof}
    Recall that $\{i\}_{k=1}^{p}$ are the absolute indices of outliers, where $i_k \in [1, d_{in}]$ and $p=\floor{\gamma d_{in}}$. We define the index gaps $x_0, x_1, \dots, x_p$ as follows
    $$x_0 = i_1, x_p=d_{in}-i_p+1, \text{ and } x_k = i_{k+1} - i_k, \forall k=1, \dots, p-1.$$
    
    Following the index coding scheme described in Section~\ref{subsec:ic}, we need to store $x_0,\dots,x_{p-1}$ using $b$-bit representation and there are two possible cases:
    \begin{enumerate}[label=(\roman*)]
        \item If $x_i \le 2^b -1$, we directly store $x_i$.
        \item Otherwise, we store $\{2^b, 2^b,\dots, (x_i-1) \bmod (2^b-1) + 1\}$,
        %$\{2^b, 2^b,\dots, x_i \bmod (2^b-1)\}$, 
        where $2^b$ is a flag value that means we accumulate $(2^b-1)$ index count during decoding, and the number of such flags $2^b$ equals $\floor{\frac{x_i - 1}{2^b-1}}$. %$\floor{\frac{x_i}{2^b-1}}$. 
    \end{enumerate}
    
    Let $\gE$ be the total number of empty intervals (i.e., the number of times the flag value $2^b$ is stored). Then we can write the expected total storage overhead as
    \begin{equation}\label{eq:B}
        \mathbb{E}(B) = \frac{bp}{d_{in}} + \frac{b}{d_{in}}\mathbb{E}(\gE). %\le \gamma b + \frac{b}{d_{in}}\mathbb{E}(\gE).
    \end{equation}
    
    It remains to bound $\mathbb{E}(\gE)$. Note that $x_0, x_1,\dots,x_p$ are positive and satisfy $\sum_{k=0}^p x_k = d_{in}+1$. Moreover, by symmetry, any permutation of $\bar{\vx}=(x_0, x_1, \dots,x_p)$ defines a valid distribution of the outlier positions with the same probability as $\bar{\vx}$. Hence, $x_i \sim x_j,\forall i\ne j$ (i.e., all $x_i$ share the same marginal distribution). Let $m = (2^b-1)$. The expected number of empty intervals $\gE$ can therefore be written as
    \begin{equation}\label{eq:E}
    \begin{aligned}
        \mathbb{E}(\gE) &~= \sum_{k=0}^{p-1} 
        \mathbb{E}(\floor{\frac{x_k-1}{m}})\\
        &~\le  \sum_{k=0}^{p-1} \mathbb{E}(\floor{\frac{x_k}{m}})\\
        &~= p \mathbb{E}(\floor{\frac{x_0}{m}}),
    \end{aligned}
    \end{equation}
    where $\mathbb{E}(\floor{\frac{x_0}{m}}) = \sum_{j\ge1} \mathbb{P}(\floor{\frac{x_0}{m}} \ge j)$.
    
    We observe that 
    \begin{equation}
        \floor{\frac{x_0}{m}} \ge j \iff x_0 \le jm.
    \end{equation}
    Therefore, we can write
    \begin{equation}\label{eq:floor}
    \begin{aligned}
        \mathbb{E}(\floor{\frac{x_0}{m}}) &~~~~~=~~~~ \sum_{j\ge1}\mathbb{P}(x_0 \le jm)  \\ 
        &~~~~~=~~~~ \sum_{j\ge1}\frac{\binom{d_{in} - jm}{p}
}{\binom{d_{in}}{p}}  \\
        &~\stackrel{\text{Claim}~\ref{clm:one}}{\le} \sum_{j\ge1} (1 - \frac{jm}{d_{in}})^p \\
        &~~~~\stackrel{(i)}{\le}~~~~ \sum_{j\ge1} e^{-\frac{jm}{d_{in}}p} \\ 
        &~~~~\stackrel{(ii)}{\le}~~~~ \sum_{j\ge1} e^{-jm\gamma} \\
        &~~~~=~~~~ \frac{1}{e^{m\gamma}-1} ~,
    \end{aligned}
    \end{equation}
    where Claim~\ref{clm:one} is proved after this lemma; (i) is due to the inequality that $1-x \le e^{-x}$ for any $x \in \mathbb{R}$; and (ii) follows $p=\floor{\gamma d_{in}}$.
    
    Finally, combining \eqref{eq:B}, \eqref{eq:E} and \eqref{eq:floor}, we have 
    \begin{equation}
        \mathbb{E}(B) \le \frac{bp}{d_{in}} + \frac{bp}{d_{in}}\cdot \frac{1}{e^{m\gamma}-1} \\ \le \gamma b + \gamma b \cdot \frac{1}{e^{2^b-1} - 1},
    \end{equation}
    where the last inequality follows $p=\floor{\gamma{d_{in}}}$ and $m=2^b-1$.
\end{proof}

\begin{restatable}{claim}{claimone}
For $a, p, N \in \mathbb{Z}^+$, with $a<N$ and $p < N$, we have
\label{clm:one}
\begin{equation}
    \frac{\binom{N-a}{p}}{\binom{N}{p}} \le (1-\frac{a}{N})^p.
\end{equation}
\end{restatable}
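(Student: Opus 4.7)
The plan is to prove Claim~\ref{clm:one} by rewriting the ratio of binomial coefficients as a telescoping-style product of $p$ rational factors, bounding each factor individually by $1 - a/N$, and multiplying. The key identity will be
\[
\frac{\binom{N-a}{p}}{\binom{N}{p}} \;=\; \frac{(N-a)!\,(N-p)!}{N!\,(N-a-p)!} \;=\; \prod_{i=0}^{p-1} \frac{N-a-i}{N-i},
\]
which is obtained by cancelling factorials so that both numerator and denominator are products of $p$ consecutive integers.

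Next I would rewrite each factor as $\tfrac{N-a-i}{N-i} = 1 - \tfrac{a}{N-i}$ and observe that, since $i \ge 0$, we have $N-i \le N$ and hence $\tfrac{a}{N-i} \ge \tfrac{a}{N}$, which yields $1 - \tfrac{a}{N-i} \le 1 - \tfrac{a}{N}$ for each $i \in \{0,1,\dots,p-1\}$. Provided every factor is nonnegative, taking the product over $i$ gives the claimed bound $(1 - a/N)^p$.

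The only subtlety is the sign/nonnegativity of the factors: if $a + p > N$, then for some $i \le p-1$ we have $N - a - i \le 0$, so the factorization would involve nonpositive terms and the factor-by-factor argument fails. I would dispose of this case at the very start by noting that when $a + p > N$ we have $\binom{N-a}{p} = 0$, so the left-hand side is $0$ and the inequality holds trivially (the right-hand side is nonnegative since $a < N$). In the remaining case $a + p \le N$, all factors $N - a - i$ for $i = 0,\dots,p-1$ are strictly positive, and the product bound goes through cleanly.

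I expect no real obstacle here beyond the edge-case split above; the argument is essentially a one-line product inequality once the correct factorization is written down. The only care required is choosing the factorization that exposes $a/N$ per term (ranging $i$ over $0,\dots,p-1$) rather than the alternative that would yield $(1-p/N)^a$, since the latter would prove the wrong inequality.
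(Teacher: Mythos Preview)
Your proposal is correct and follows essentially the same approach as the paper: the paper expands the ratio as $\prod_{i=0}^{p-1}\frac{N-a-i}{N-i}=\prod_{i=0}^{p-1}\bigl(1-\tfrac{a}{N-i}\bigr)$ and bounds each factor by $1-\tfrac{a}{N}$. Your explicit treatment of the degenerate case $a+p>N$ (where $\binom{N-a}{p}=0$) is a small but welcome addition that the paper's proof leaves implicit.
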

\begin{proof}
    \begin{equation}
        \begin{aligned}
            \frac{\binom{N-a}{p}}{\binom{N}{p}} &~= \frac{(N-a)! (N-p)!}{(N-a-p)!N!} \\
            &~= \prod_{i=0}^{p-1} \frac{N-a-i}{N-i} \\
            &~= \prod_{i=0}^{p-1} (1-\frac{a}{N-i}) \\
            &~\le \prod_{i=0}^{p-1} (1-\frac{a}{N}) \\
            &~= (1-\frac{a}{N})^p
        \end{aligned}
    \end{equation}
\end{proof}

\subsection{Proof of Lemma~\ref{lm:worst}}\label{app:proof2}
\lmworst*
\begin{proof}    
    Let $\{i\}_{k=1}^{p}$ denote the absolute indices of the outliers, with $i_k \in [1, d_{in}]$ and $p=\floor{\gamma d_{in}}$. We define $i_0=0$. When applying the index coding scheme proposed in Section~\ref{subsec:ic}, we store each index gap $i_{k} - i_{k-1}$ as $\gE_k$ ($\gE_k \ge 0$) number of flag values (i.e., $2^b$) and an overflow part $y_k$ ($1 \le y_k \le 2^b -1$), such that $i_{k} - i_{k-1} = \gE_k(2^b - 1) + y_k$. We need $b$ bits to store each $y_k$ or flag value, and the total storage overhead $B$ can be written as
    \begin{equation}\label{eq:Bmax}
        B = \frac{b}{d_{in}}\left(\sum_{k=1}^p \gE_k +p \right).
    \end{equation}
    
    %Since the number of $y_k$ is fixed, 
    Now, it remains to upper bound the total number of flag values required $\sum_{k=1}^p \gE_k$. Since $i_p \le d_{in}$ and $i_0=0$, we have 
    \begin{equation}
        \sum_{k=1}^p \left ( \gE_k(2^b - 1) + y_k \right ) = i_p-i_0 \le d_{in},
    \end{equation}
    which implies 
    \begin{equation}\label{eq:lm2-b1}
        \sum_{k=1}^p \gE_k \le \frac{d_{in} - \sum_{k=1}^p y_k}{2^b - 1} \le \frac{d_{in} - p}{2^b - 1},
    \end{equation}

    where the second inequality is due to $y_k \ge 1$. 
    
    Combining (\ref{eq:Bmax}) and (\ref{eq:lm2-b1}), we can then upper bound the total storage overhead $B$ as follows:

\begin{equation}
\begin{aligned}
    B
     ~\le~ &\frac{b}{d_{in}}\left(\frac{d_{in} - p}{2^b - 1} +p \right) \\
     =~& \frac{b}{d_{in}}\left(\frac{d_{in} - \floor{\gamma d_{in}}}{2^b - 1} +\floor{\gamma d_{in}} \right) \\
     \le ~& \frac{b}{d_{in}}\left(\frac{d_{in} - \gamma d_{in}+1}{2^b - 1} +\gamma d_{in} \right) \\
    \le~ & \left(\frac{(1-\gamma + \frac{1}{d_{in}})}{2^b - 1} + \gamma \right) b.
\end{aligned}
\end{equation}
\end{proof}

\textbf{Worst Case.}
The upper bound in (\ref{eq:lm2-b1}) is obtained when $i_p = d_{in}$ and $y_k=1, \forall k=1,\ldots,p$. One example of such worst cases is when outliers are concentrated at the end (and the beginning) of a weight channel (row). Specifically, for a weight vector $\vw$, the first $j$ ($0 < j <p$) elements and the last $p-j$ elements are outliers. In this case, the absolute outlier indices are
$$\{i\}_{k=1}^{p}=\{1, 2, \ldots, j, d_{in}-(p-j)+1, \ldots, d_{in}-1, d_{in}\},$$ with index gaps of $1$ between consecutive indices, except for a single large gap in the middle. 

\newpage
\section{Range of Outliers}\label{app:distr}
Similar to Figure~\ref{fig:stats1} (a), we here provide plots of the normalized weight range taken by the outliers for other models in Llama2 and Llama3 Families in Figure~\ref{fig:range_all}.

\begin{figure}[h]
    \centering
    \includegraphics[width=0.46\linewidth]{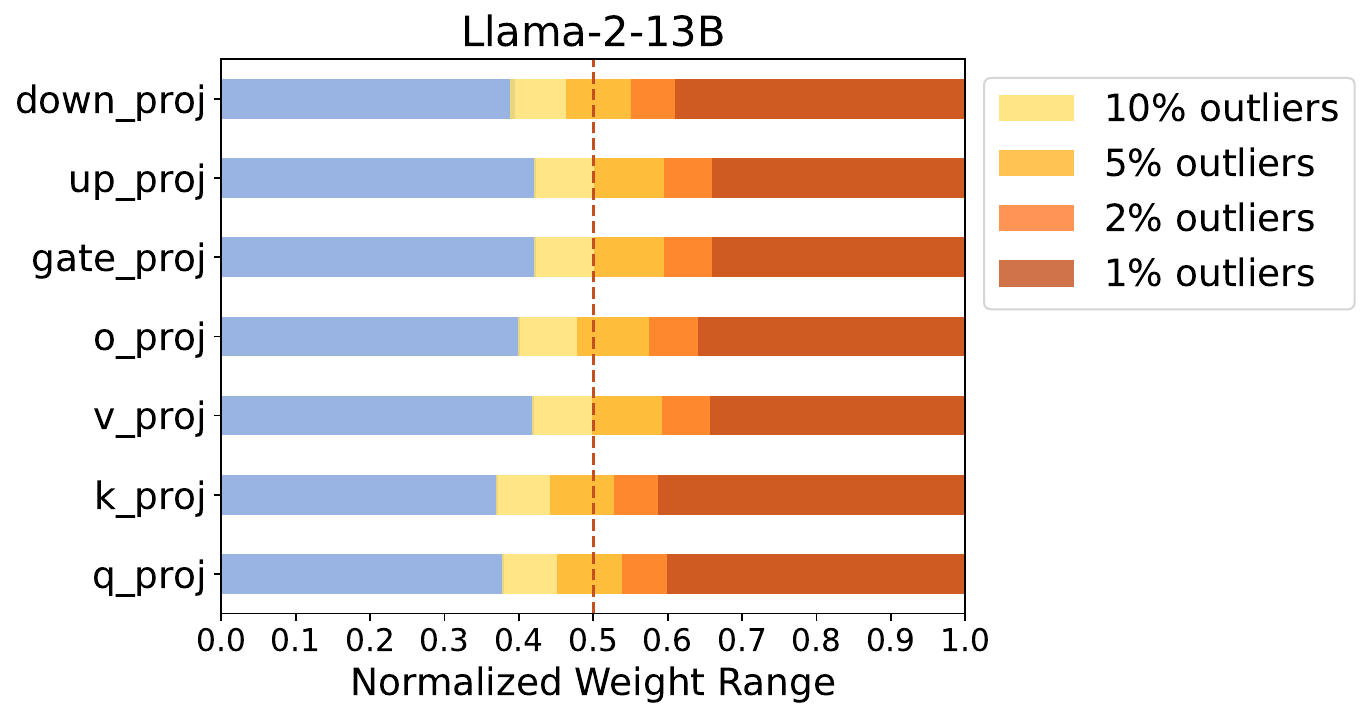}
    \includegraphics[width=0.46\linewidth]{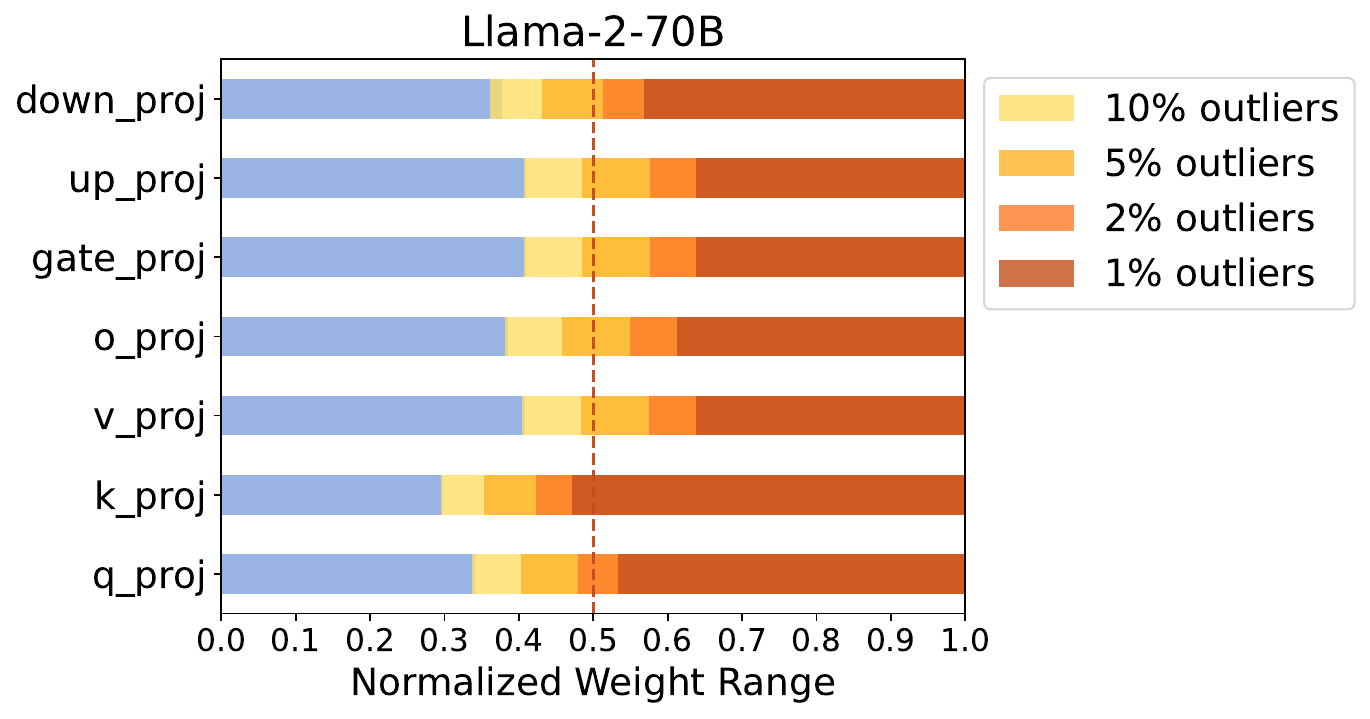}\\
    \includegraphics[width=0.46\linewidth]{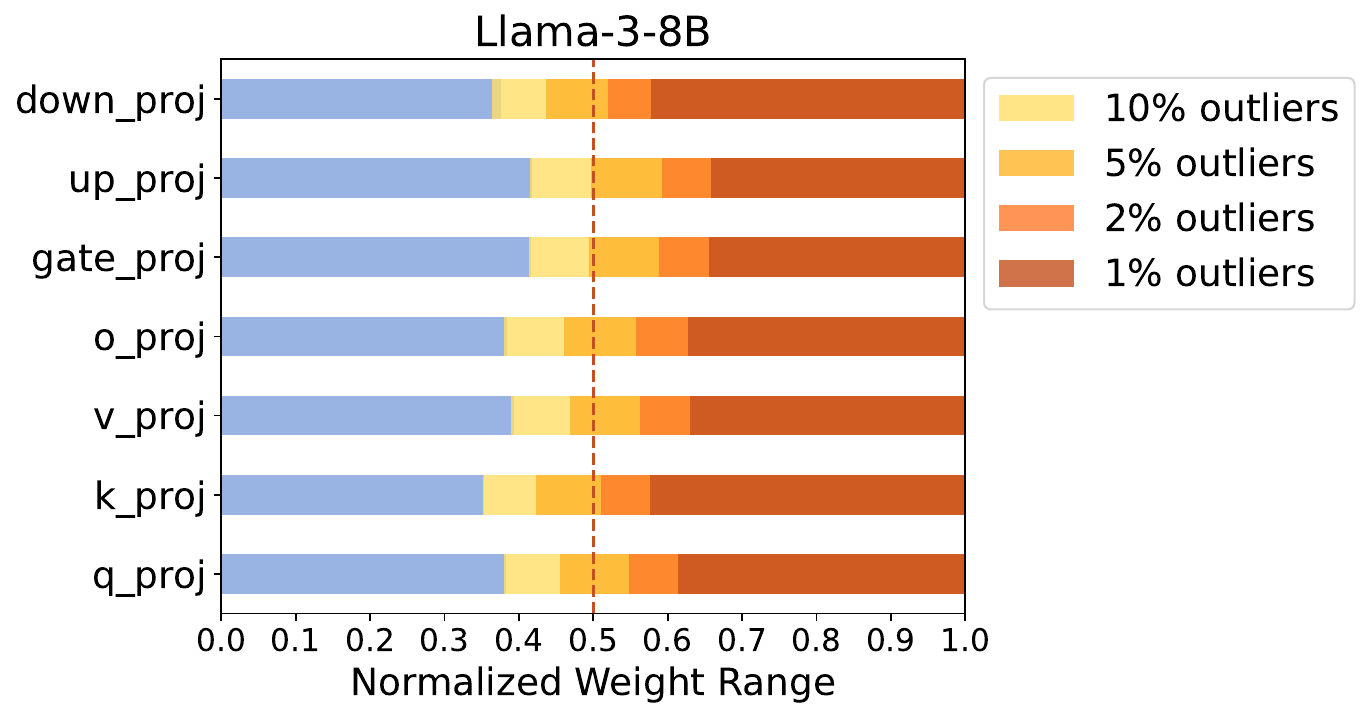}
    \includegraphics[width=0.46\linewidth]{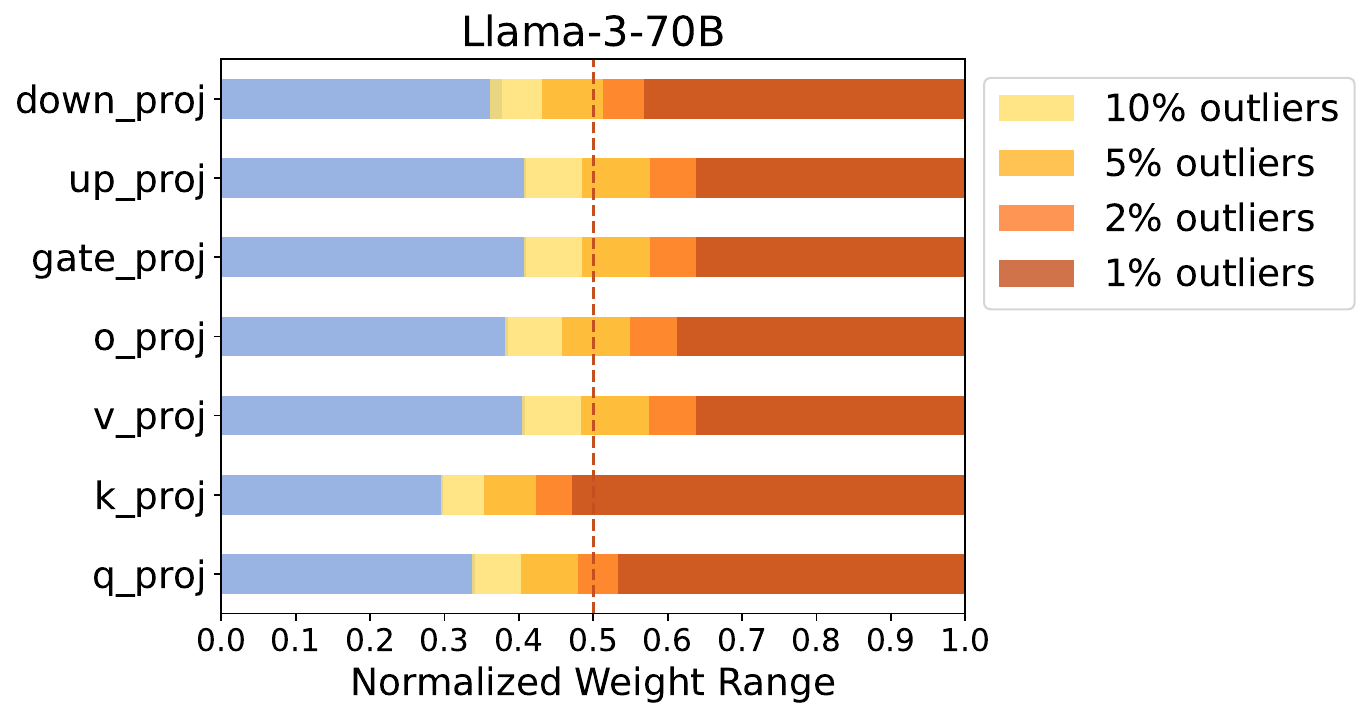}
    \includegraphics[width=0.46\linewidth]{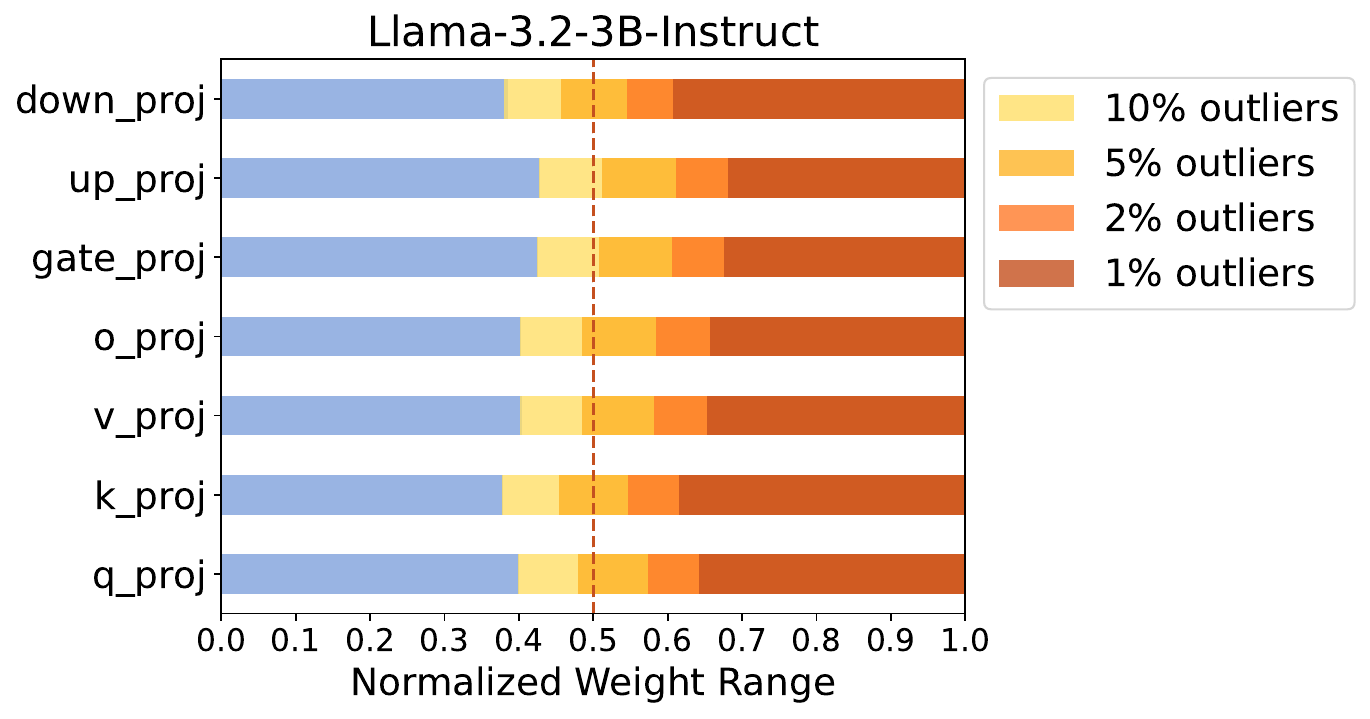}
    \includegraphics[width=0.46\linewidth]{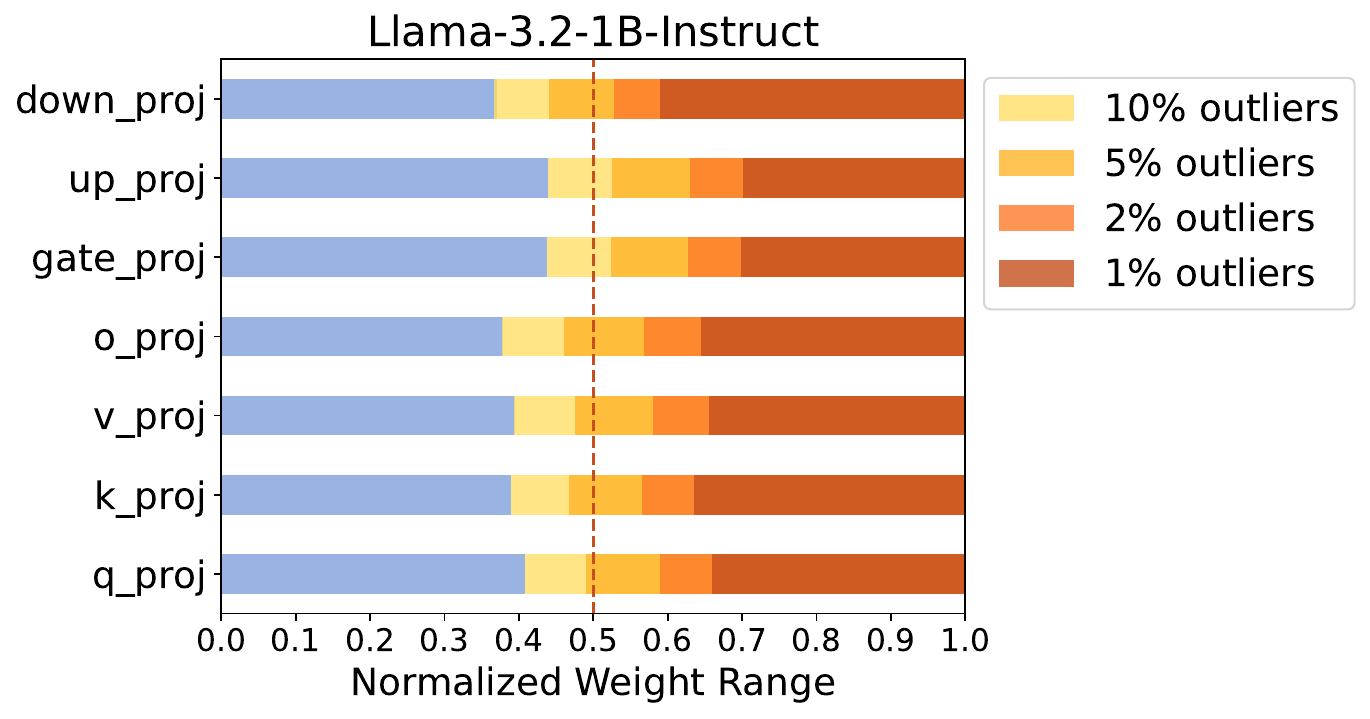}
    \caption{The normalized range taken by different amounts of outliers (starting from $1\%$), where the values of each type of layer are averaged over the whole model.}
    \label{fig:range_all}
\end{figure}

\section{Uniform Distribution of Outlier Positions}\label{app:uniform}
\subsection{Chi-square Test}\label{app:test}
We use a Chi-square goodness-of-fit test \citep{pearson1900x} at a significance level of 0.05 to examine whether weight outliers are uniformly spread out across each output channel. Specifically, for each output channel (i.e., each row of the weight matrix), we divide the weight vector into non-overlapping groups of 256 elements. Assuming an outlier ratio of 6.25\%, each group is expected to contain 16 outliers under the null hypothesis of uniform distribution. The Chi-square test is applied to each channel to test this hypothesis, and we report the rejection rate—the proportion of channels for which the null hypothesis is rejected—in Table~\ref{tab:chi2}. We observed consistent results across a wide range of popular LLMs, including Llama2 \citep{touvron2023llama2}, Llama3, Llama3.2-Instruct \citep{grattafiori2024llama3}, Llama4 Scout \citep{meta2025llama}, and Qwen2.5-Instruct \citep{yang2024qwen2}. This selection spans both base and instruction-tuned models, as well as dense and MoE architectures. Our results show that the positions of most weight outliers follow a uniform distribution, except for the output projection layers in self-attention blocks. In addition, we observe that within each model family, larger models show lower rejection rates.

\begin{table}[h]
\centering
\resizebox{0.88\textwidth}{!}{
\begin{tabular}{ll|c|c|c|c|c|c|c}
\toprule
 \multicolumn{2}{c|}{\textbf{Model}} & \textbf{q\_proj} & \textbf{k\_proj} & \textbf{v\_proj} & \textbf{o\_proj} & \textbf{up\_proj} & \textbf{gate\_proj} & \textbf{down\_proj}\\
\midrule
\multirow{3}{*}{Llama 2} & 7B & 3.38\% & 3.39\%  & 3.14\% & 62.15\% & 3.10\% & 3.12\% & 2.37\% \\
 & 13B & 2.95\% & 2.98\% & 2.91\% & 59.37\% & 2.96\% & 2.96\% & 2.16\% \\
 & 70B & 2.73\% & 2.80\% & 2.51\% & 94.56\% & 2.60\% & 2.60\% & 1.51\% \\
\midrule
\multirow{3}{*}{Llama 3} &  8B & 3.01\% & 3.13\%  & 2.95\% & 95.25\% & 3.10\% & 3.08\% & 2.11\% \\
 & 70B &2.54\% & 2.57\% & 2.57\% & 70.64\% & 2.59\% & 2.61\% & 1.64\% \\
\midrule
\multirow{2}{*}{Llama 3.2} &  1B & 3.49\% & 3.94\% & 3.66\% & 82.33\% & 3.45\% & 3.45\% & 2.65\% \\
 & 3B & 3.20\% & 3.18\% & 3.27\% & 85.02\% & 3.26\% & 3.27\% & 2.55\% \\
\midrule
\multirow{2}{*}{Qwen 2.5} & 7B & 3.19\% & 3.13\%  & 3.24\% & 94.70\%  & 3.11\% & 3.12\% & 1.82\% \\
 & 32B & 2.77\% & 2.73\% & 2.98\% & 90.37\% & 2.78\% & 2.80\% & 1.56\% \\
 \midrule
Llama 4 Scout & 17Bx16E & 3.09\% & 3.06\% & 3.13\% & 97.37\% & 2.99\% & 2.99\% & 2.60\% \\
\bottomrule
\end{tabular}
}
\caption{Chi-Square test rejection rates for different LLMs.}
\vspace{-.1in}
\label{tab:chi2}
\end{table}

\subsection{Example of Random Permutation}\label{app:randperm}
As mentioned in Section~\ref{sec:stats}, when outlier positions do not naturally follow a uniform distribution, we can enforce uniformity through a one-time random permutation before quantization. Given a weight matrix $\mW$, we aim to distribute outliers across each output channel (i.e., a row of the matrix). To achieve this, we randomly shuffle the columns in $\mW$ by right multiplication with a random permutation matrix $\mP$, producing $\mW\mP$. We then reorder the input (activation) $X$ by left multiplication with $\mP^{\top}$, yielding $\mP^{\top}X$. Since the permutation matrix $\mP$ is orthogonal, the linear transformation output remains unchanged: $\mW\mP\mP^{\top}X=\mW X$. Note that if $\mW$ is not the first linear transformation, the reorder of the input $X$ can be combined with the previous linear layer by permuting its output channels (rows of the weight matrix).  

Figure~\ref{fig:perm} illustrates this multilayer permutation process using an MLP block from Llama's architecture, where $\mP_1$ and $\mP_2$ are random permutation matrices. Although the permutation for each layer can potentially be different, we use the same $\mP_1$ throughout to propagate the input order, accommodating the residual structure of the transformer. Note that these permutations preserve the model architecture because the permutation matrices can be absorbed into $\mW$ — we only need to store $\mP_1^{\top}\mW\mP_2$ (or $\mP_2^{\top}\mW\mP_1$). For the last linear layer in the model, we maintain the original order of its output channels, ensuring the model output remains unchanged.

\begin{figure}[!h]
    \centering
    \includegraphics[width=0.88\linewidth]{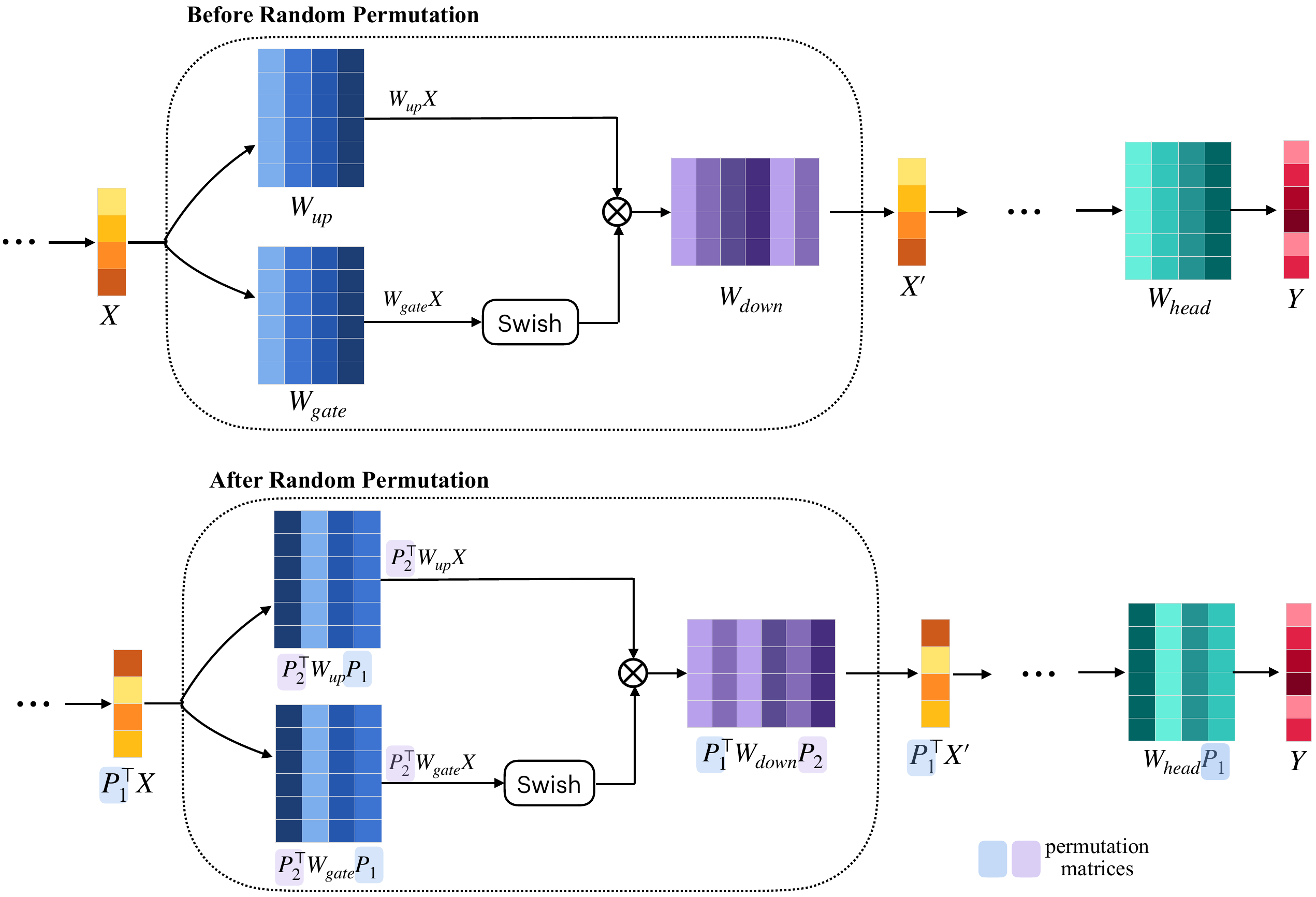}
    \vspace{-.1in}
    \caption{Example of random permutations for an MLP block in Llama-like transformers.}
    \label{fig:perm}
\end{figure}

\section{Index Storage Cost Analysis}\label{app:idx_cost}
Figure~\ref{fig:idx_cost_more} plots the storage overhead of \ours under different outlier ratios.
\begin{figure}[!h]
    \centering
    \includegraphics[width=0.45\linewidth]{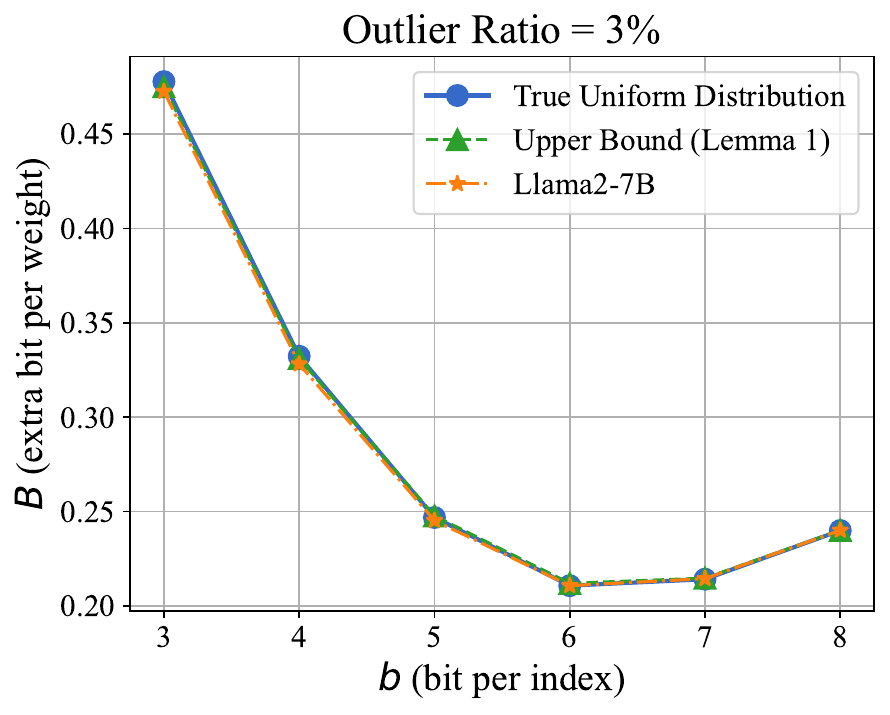}
    \includegraphics[width=0.45\linewidth]{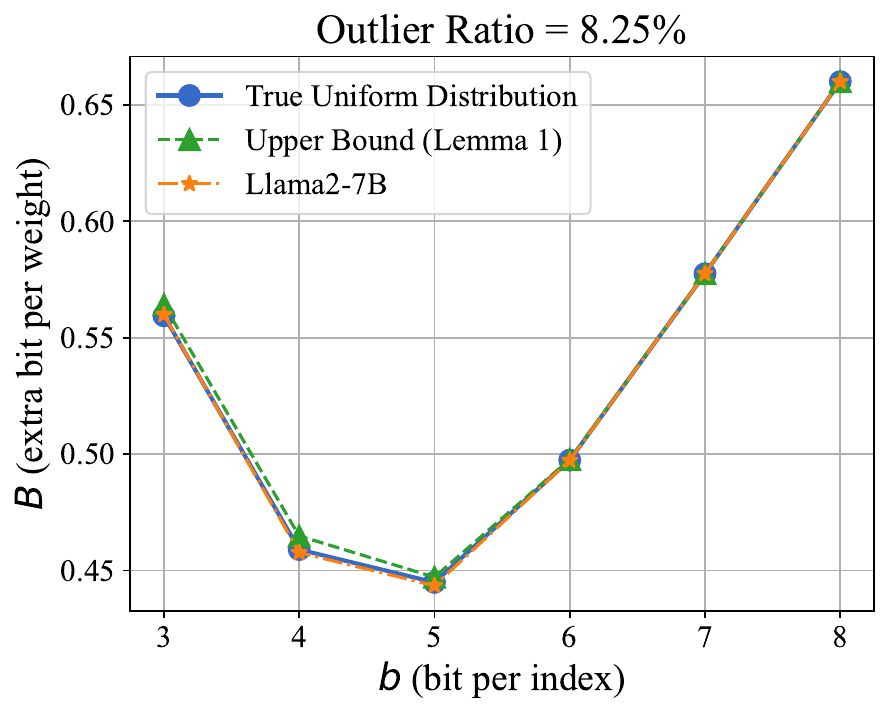}
    \caption{Index storage cost estimation of \ours.}
    \label{fig:idx_cost_more}
\end{figure}

\section{Implementation Details}\label{app:implement}
\subsection{Quantizers}
In the following we describe the implementation details of the two quantization schemes used in Section~\ref{sec:exp}.
\begin{itemize}
    \item ICQuant$^{\text{RTN}}$: We apply standard rounding-to-nearest (RTN) uniform quantization on the inlier weights that are centered around zero. Since the positive and negative outliers are separated on the two tails of the distribution, we use 1 bit to denote the sign and quantize them separately using $n-1$ bits RTN.
    \item ICQuant$^{\text{SK}}$: Following SqueezeLLM \citep{kim2023squeezellm}, we apply weighted K-means clustering to minimize the proxy objective, 
    \begin{equation*}
        Q(\vw)^* = \argmin_{Q} (W-W_Q)^{\top}\mH(W-W_Q),
    \end{equation*}
    where the Hessian matrix $\mH$ is approximated using the Fisher information matrix that is generated with 128 randomly selected sequences from the C4 dataset \citep{raffel2020c4}. Each sequence has a length of 2048 for Llama2 and 8192 for Llama3. We apply such k-means clustering twice - once on the inlier weights and once on the outliers. For non-uniform quantization, we can quantize the negative and positive outliers together using $n$ bits. 
\end{itemize}
\subsection{Evaluation}
We follow the evaluation setup in recent PTQ literature \citep{frantar2022gptq,lin2024awq,tseng2024qtip}, and compare our results directly with those reported in the baseline methods' papers.

In particular, we use the same test split as GPTQ \citep{frantar2022gptq} to calculate the perplexity on Wikitext2 and C4. For zero-shot evaluations, we use LM Eval v0.3.0 for all Llama2 models, matching the baseline methods. Note that the latest version of the library (v0.4.x) produces different results (typically $>2\%$ difference), while even within the same version there is minor variance ($<0.5\%$).

\subsection{Inference Kernel}
At runtime, we first decode the outlier indices and dequantize them to full precision before multiplying them with the unquantized activation vectors. To minimize global memory access and kernel launch overhead, we implement decoding and dequantization on the fly and fuse them with matrix multiplication. However, since the decoding step is inherently sequential and difficult to parallelize on GPUs, we encode outlier gaps within smaller blocks (e.g., 256 columns). This approach enables parallel decoding of each block by a single thread. While the number of outliers per block may vary, we can store this information as a small auxiliary value. Due to the approximate uniformity of outlier positions, the variation of this number remains minimal, resulting in negligible additional storage overhead.

To isolate the impact of quantization algorithms from unrelated system-level factors, we benchmark GPU latency using a standard linear layer (mlp.gate\_proj) from the LLaMA2-7B model. All latency measurements are conducted using the PyTorch CUDA profiler.

\section{Ablation Study} \label{app:abla}
\subsection{Outlier Ratio}\label{app:abla-outlier}
In our experiments, we initiate with a 5\% outlier ratio based on our empirical observation (Figures~\ref{fig:stats1} and \ref{fig:range_all}) that approximately 5\% of outliers account for half of the total range. This separation effectively splits the weight range in half, ensuring consistent quantization resolution across all weights (both inliers and outliers). 

To validate the impact of different outlier ratios, we evaluate ICQuant with 3-bit RTN uniform quantization on Llama2-7B, with outlier ratios ranging from 1\% to 10\%. Table~\ref{tab:outlier_ratio} shows that the perplexity score decreases (indicating improved performance) as the outlier ratio increases, which aligns with our expectation since a smaller quantization range for inliers benefits the substantial majority (95\%) of weights. However, we observed that the performance gain from separating additional outliers saturates beyond 5\%. This is consistent with the Gaussian-like distribution of model weights, where range variations become negligible beyond this point. We therefore maintain a 5\% outlier ratio throughout most experiments to balance between storage overhead and quantization quality.
\begin{table}[h]
    \centering
    \begin{tabular}{c|c|c|c|c|c|c|c|c|c|c}
        \toprule
         Outlier Ratio & 1\% & 2\% & 3\% & 4\% & 5\%  & 6\% & 7\% & 8\% & 9\% & 10\% \\
         \midrule
         Wiki2 Perplexity & 6.27 & 5.77 & 5.67 & 5.64 & 5.59 & 5.57 & 5.57 & 5.57 & 5.56 & 5.55 \\
         \bottomrule
    \end{tabular}
    \caption{Wikitext2 perplexity ($\downarrow$) of quantized Llama2-7B using ICQuant$^{\text{RTN}}$, with different outlier ratios.}
    \label{tab:outlier_ratio}
\end{table}

\subsection{Compare with Vanilla SK Quantizer} \label{app:abla-sk}
In our main experiments, we apply ICQuant to the sensitivity-aware K-means (SK) quantizer proposed in the SqueezeLLM paper \citep{kim2023squeezellm}. By separating outliers and inliers during quantization, ICQuant effectively gains an additional bit of quantization capacity.  Using only $\approx$2 bits, we expect ICQuant to achieve performance comparable to vanilla SK's 3-bit result, which outperforms all 2-bit state-of-the-art baselines. To demonstrate this, we present WikiText2 perplexity scores for Llama2-7B quantized at 2-3 bits in Table~\ref{tab:squeezeLLM}. ICQuant substantially improves upon 2-bit vanilla SK, approaching the 3-bit performance level. The remaining gap between ~2-bit ICQuant and 3-bit vanilla SK exists because the non-uniform quantization biases toward the more concentrated inlier region. This finding also confirms that increasing the outlier proportion (e.g., to 8.25\%) can further enhance model performance.

\begin{table}[h]
    \centering
    \resizebox{0.45\textwidth}{!}{
    \begin{tabular}{c|c|c}
         \toprule
         \multirow{2}{*}{Method} & \multirow{2}{*}{bits} & Llama2 - 7B \\
         & & Wiki2$\downarrow$ \\
         \midrule
         \myrowcolour%
         FP16 & 16 & 5.12 \\ 
         \textbf{Vanilla SK} & 3.0 & 5.71 \\
         \midrule
         QuIP\# & 2.0 & 8.22 \\
         QTIP & 2.0 & 6.82 \\
         \myrowcolour%
         ICQuant$^{\text{SK}}$-8.25\% & 2.4 & 6.35\\
         \myrowcolour%
         ICQuant$^{\text{SK}}$-5\% & 2.3 & 6.75 \\
         \textbf{Vanilla SK} & 2.0 & 38.05 \\
         \bottomrule
    \end{tabular}
    }
    \caption{Wikitext2 perplexity ($\downarrow$) of quantized Llama2-7B using ICQuant$^{\text{SK}}$ and Vanilla SK.}
    \label{tab:squeezeLLM}
\end{table}

\newpage
\section{Other Experiment Results}\label{app:other_results}
Across a wide range of LLMs, from 1B to 70B, ICQuant$^{\text{SK}}$ (scalar quantization) matches and sometimes exceeds the state-of-the-art (\textit{vector quantization}) performance.

\begin{table}[th]
    \centering
    \resizebox{\textwidth}{!}{
    \begin{tabular}{c|c|c|c|c|c|c|c|c|c|c|c|c|c}
        \toprule
         \multirow{2}{*}{Method} & \multirow{2}{*}{bits} & \multicolumn{4}{c|}{Llama2 - 7B} & \multicolumn{4}{c|}{Llama2 - 13B} & \multicolumn{4}{c}{Llama2 - 70B}\\
         & & ArcC$\uparrow$ & ArcE$\uparrow$ & PiQA$\uparrow$ & Wino$\uparrow$ & ArcC$\uparrow$ & ArcE$\uparrow$ & PiQA$\uparrow$ & Wino$\uparrow$ & ArcC$\uparrow$ & ArcE$\uparrow$ & PiQA$\uparrow$ & Wino$\uparrow$ \\
         \midrule
         \myrowcolour%
         FP16 & 16 & 40.0 & 69.3 & 78.4 & 67.2 & 45.6 & 73.2 & 78.8 & 69.6 & 51.1 & 77.7 & 81.1 & 77.0\\
         %\cmidrule(l){2-9}
         AQLM & 4.0 & [41.0] & [70.2] & [78.2] & [67.3] & [44.8] & [73.3] & [78.4] & [69.9] & [50.7] & [77.3] & [81.5] & [76.5] \\
         QuIP\# & 4.0 & [40.4] & [68.6] & [78.5] & [67.4] & [43.6] & [71.3] & [78.7] & [69.6] & [50.5] & [77.7] & [81.4] & [77.3] \\
         QTIP & 4.0 & [40.4] & [68.9] & [78.4] & [67.1] & [44.8] & [73.6] & [78.9] & [69.9] & [50.0] & [77.8] & [81.3] & [76.9] \\
         \myrowcolour%
         ICQuant$^{\text{SK}}$-5\% & 4.3 & 40.5 & 69.0 & 78.1 & \textbf{67.4} & \textbf{45.6} & 72.5 & 78.8 & 69.1 & 50.6 & \textbf{77.9} & 81.1 & 76.6 \\
          \midrule
         AQLM & 3.0 & [38.4] & [68.1] & [76.9] & [66.9] & [42.6] & [70.9] & [77.3] & [68.4] & [50.0] & [77.6] & [81.3] & [77.2] \\
         QuIP\# & 3.0 & [39.2] & [68.4] & [77.3] & [66.5] & [44.0] & [72.5] & [78.4] & [69.1] & [50.9] & [77.6] & [81.4] & [76.1] \\
         QTIP & 3.0 & [38.9] & [68.1] & [78.1] & [66.9] & [44.0] & [72.8] & [78.0] & [69.5] & [50.0] & [78.2] & [80.6] & [77.0] \\
          \myrowcolour%
         ICQuant$^{\text{SK}}$-5\% & 3.3 & 39.0 & 66.6 & 77.7 &\textbf{66.9} & \textbf{44.5} & 70.9 & \textbf{78.7} & \textbf{69.8} & 49.7 & 77.9 & 81.1 & 76.0 \\
          \midrule
         AQLM & 2.0 & [32.8] & [63.7] & [74.8] & [65.7] & [38.8] & [69.3] & [75.9] & [68.8] & [47.9] & [77.7] & [80.4] & [75.9] \\
         QuIP\# & 2.0 & [35.2] & [65.3] & [75.4] & [64.9] & [39.6] & [69.0] & [77.3] & [67.4] & [47.6] & [77.1] & [79.5] & [74.6] \\
         QTIP & 2.0 & [35.7] & [65.6] & [75.9] & [64.7] & [41.4] & [70.8] & [77.3] & [67.6] & [48.0] & [76.3] & [80.2] & [75.1] \\
         \myrowcolour%
         ICQuant$^{\text{SK}}$-8.25\%& 2.4 & \textbf{35.9} & 63.6 & \textbf{76.1} & 65.6 & 39.8 & 68.4 & 77.0 & 64.2 & \textbf{48.6} & 75.2 & \textbf{81.3} & 74.7\\
         \myrowcolour%
         ICQuant$^{\text{SK}}$-5\% & 2.3 & 34.1 & 62.0 & 75.0 & 63.6 & 39.4 & 69.1 & 76.9 & 64.7 & 46.2 & 74.0 & 80.2 & 74.8\\
         \midrule
    \end{tabular}
    }
    \caption{Zero-shot accuracy ($\uparrow$) of all Llama2 models using \textit{vector quantization} algorithms and ICQuant$^{\text{SK}}$ (\textit{scalar quantization}), where the accuracy values after fine-tuning are wrapped in $[\cdot]$. We highlight the cases where ICQuant$^{\text{SK}}$ outperforms all fine-tuned baselines.}
    \label{tab:results_L2_acc}
\end{table}

\begin{table}[h]
    \centering
    \resizebox{0.75\textwidth}{!}{
    \begin{tabular}{c|c|c|c|c|c|c|c}
        \toprule
         \multirow{2}{*}{Method} & \multirow{2}{*}{bits} & \multicolumn{6}{c}{Llama3 - 8B}\\
         & & Wiki2$\downarrow$ & C4$\downarrow$ & ArcC$\uparrow$ & ArcE$\uparrow$ & PiQA$\uparrow$ & Wino$\uparrow$ \\
         \midrule
         \myrowcolour%
          BP16 & 16 & 5.54 & 7.10 & 50.4 & 80.1 & 79.7 & 72.5  \\
         QuIP\# & 4.0 & [5.81] & [7.32] & [50.2] & [79.7] & [79.7] & [73.1]\\
         QTIP & 4.0 & [5.67] & [7.20] & [50.2] & [79.6] & [79.4] & [73.4] \\
         \myrowcolour%
         ICQuant$^{\text{SK}}$-5\% & 4.3 & 5.69 & 7.23 & 49.7 & \textbf{79.8} & \textbf{79.7} & 73.0 \\
         \midrule
         QuIP\# & 3.0 &  [6.27] & [7.71] & [46.4] & [77.4] & [77.9] & [72.9]  \\
         QTIP & 3.0 &  [6.01] & [7.48] & [49.2] & [79.3] & [79.2] & [74.5] \\
         \myrowcolour%
         ICQuant$^{\text{SK}}$-5\% & 3.3 & 6.22 & 7.73 & 47.4 & 78.1 & 78.6 & 72.9 \\
          \midrule
         QuIP\# & 2.0 &[7.84] & [9.04] & [39.2] & [72.9] & [75.6] & [68.2] \\
         QTIP & 2.0 & [7.33] &  [8.62] & [44.2] & [75.2] & [77.6] & [70.7] \\
         \myrowcolour%
         ICQuant$^{\text{SK}}$-8.25\% & 2.4 & 9.67 & 10.97 & 38.0 & 71.6 & 76.6 & 67.9 \\
         \myrowcolour%
         ICQuant$^{\text{SK}}$-5\% & 2.3 & 11.53 & 12.78 & 35.8 & 70.7 & 74.9 & 66.6 \\
         \midrule
    \end{tabular}
    }
    \caption{Perplexity ($\downarrow$) and zero-shot accuracy ($\uparrow$) of Llama3-8B models (context length = 8192), quantized to 2-4 bit regime, using \textit{vector quantization} algorithms and ICQuant$^{\text{SK}}$ (\textit{scalar quantization}), where the values after fine-tuning are wrapped in $[\cdot]$. We highlight the cases where ICQuant$^{\text{SK}}$ outperforms all fine-tuned baselines.}
    \label{tab:results_L3_8}
\end{table}

\begin{table}[h]
    \centering
    \resizebox{\textwidth}{!}{
    \begin{tabular}{c|c|c|c|c|c|c|c|c}
        \toprule
         Model & Method & bits & Wiki2$\downarrow$ & ArcC$\uparrow$ & ArcE$\uparrow$ & HellaSwag$\uparrow$ & PiQA$\uparrow$ & WinoGrande$\uparrow$ \\
         \midrule
         \myrowcolour%
         \multirow{3}{*}{Llama3.2 - 1B} \cellcolor{white} & BP16 & 16 & 11.57 & 35.9 & 68.5 & 45.2 & 74.3 & 59.7 \\
         & QTIP & 4.0 & [11.93] & [34.8] & [68.4] & [44.5] & [73.3] & -\\
         \myrowcolour%
         \cellcolor{white} & ICQuant$^{\text{SK}}$-5\% & 4.3 & 11.96 & \textbf{35.7} & 68.1 & \textbf{44.8} & \textbf{73.9} & 59.7  \\
         \midrule
         \myrowcolour%
          \multirow{3}{*}{Llama3.2 - 3B} \cellcolor{white} & BP16 & 16 & 9.58 & 43.3 & 74.3 & 52.2 & 75.7 & 67.3 \\
         & QTIP & 4.0 & [9.77] & [43.5] & [74.3] & [51.9] & [75.1] & - \\
         \myrowcolour%
         \cellcolor{white} & ICQuant$^{\text{SK}}$-5\% & 4.3 & \textbf{9.73} & 42.1 & 74.1 & \textbf{52.0} & \textbf{75.1} &  66.8\\
         \midrule
    \end{tabular}
    }
    \caption{Perplexity ($\downarrow$) and zero-shot accuracy ($\uparrow$) of Llama3.2 instruction-tuned models (context length = 8192), quantized to 4-bit regime, using QTIP (\textit{vector quantization}) and ICQuant$^{\text{SK}}$ (\textit{scalar quantization}), where the values after fine-tuning are wrapped in $[\cdot]$. For these smaller models, 4-bit ICQuant still achieves performance nearly identical to FP16, comparable to the state-of-the-art.}
    \label{tab:results_L3.1}
\end{table}

\section{Other Observations}\label{app:other_obs}
\subsection{Outliers are less important}\label{app:importance}

Although large-magnitude outliers in activations (i.e., input features) are consistently sensitive to quantization \citep{dettmers2022gpt3int8, lin2024awq, sun2024massive}, weight outliers appear to behave differently. Interestingly, previous research has shown improvements through both clipping weight outliers \citep{shao2023omniquant} and by preserving them in full precision \citep{kim2023squeezellm}.  To examine the importance of weight outliers, we measure their sensitivity scores using Fisher information as proposed in \cite{kim2023squeezellm}. Figure \ref{fig:stats_sens} plots the sensitivity scores against the weight values of two representative weight channels in Llama2-7B. The result shows that weights in the distribution's tail regions have significantly lower sensitivity than those in the central region. This observation highlights the inefficiency of mix-precision approaches that preserve outliers in full precision. More importantly, it explains our experimental observations that separating more outliers - which enables more precise quantization of the critical inlier values - improves the quantization performance.

\begin{figure}[h]
    \centering
    %\centerline{\includegraphics[width=1.0\columnwidth]{}}
    \includegraphics[width=0.39\columnwidth]{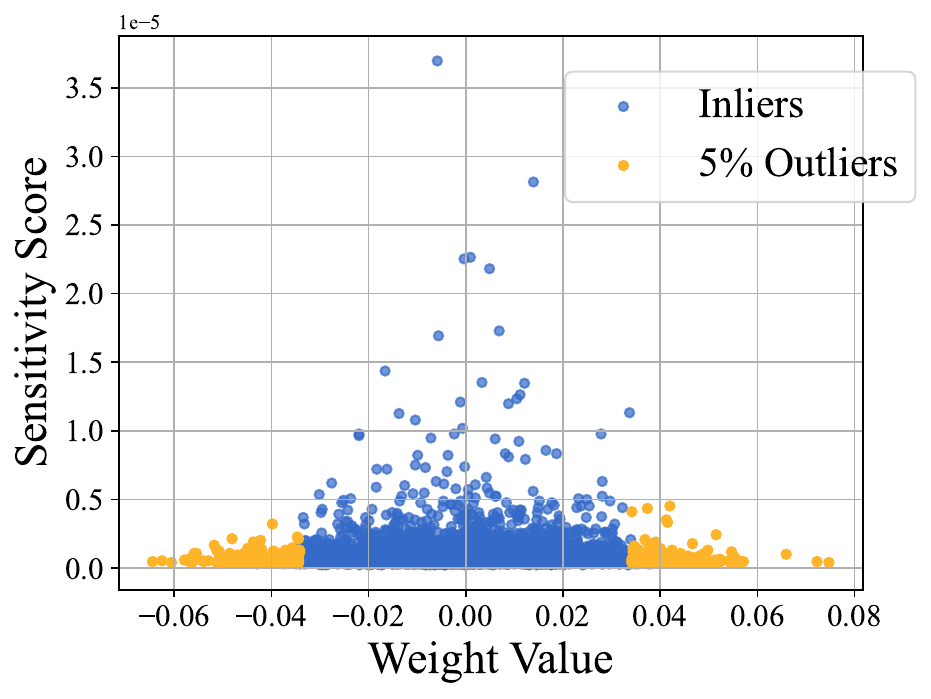}
    \includegraphics[width=0.42\columnwidth]{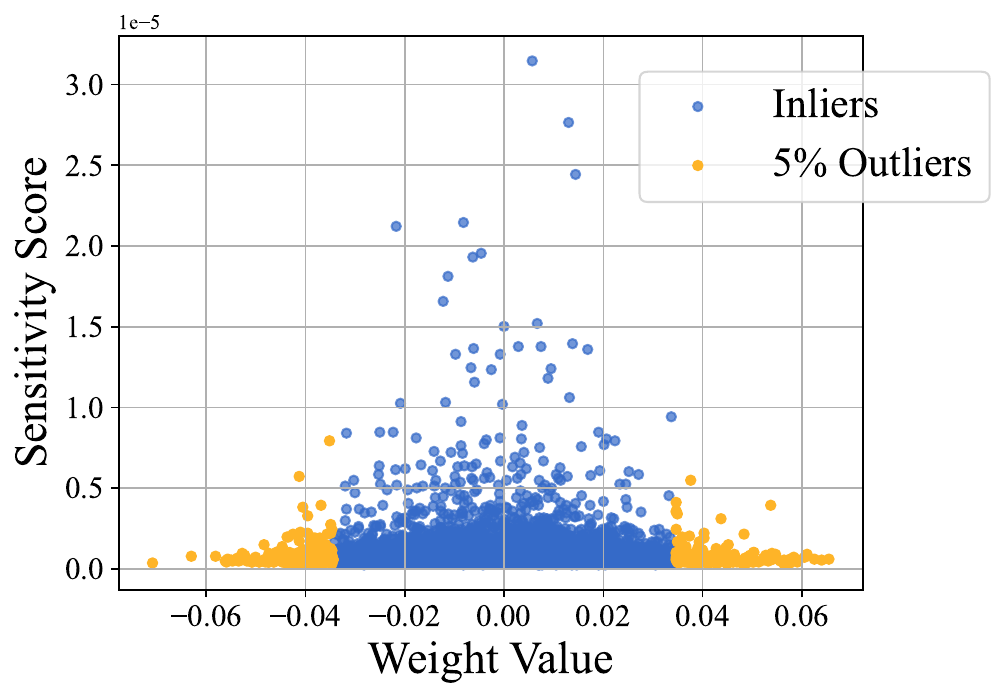}
    \caption{[Llama2-7B] Examples of weight values v.s. sensitivity scores.}
    \label{fig:stats_sens}
\end{figure}

\subsection{Examples of Incoherence Processing}\label{app:IP}
To investigate the effect of incoherence processing \citep{chee2023quip} on reducing the quantization range, we visualized the weight distributions before and after applying random rotation, with examples provided in Figure~\ref{fig:ip_q_proj} and Figure~\ref{fig:ip_down_proj}. We observed that incoherence processing can significantly reduce the weight range when there exist extremely large outliers, resulting in Gaussian-like distributions. However, the benefit of such rotation becomes negligible when the weight distribution already exhibits Gaussian-like behavior. The first case appears mostly in the initial layers of the model, which explains the results in Figure~\ref{fig:rtn} (b) that the incoherence processing yields a substantial reduction in quantization MSE of the first transformer block, but shows small returns in subsequent blocks. 

\begin{figure}[h]
    \centering
    \includegraphics[width=0.55\linewidth]{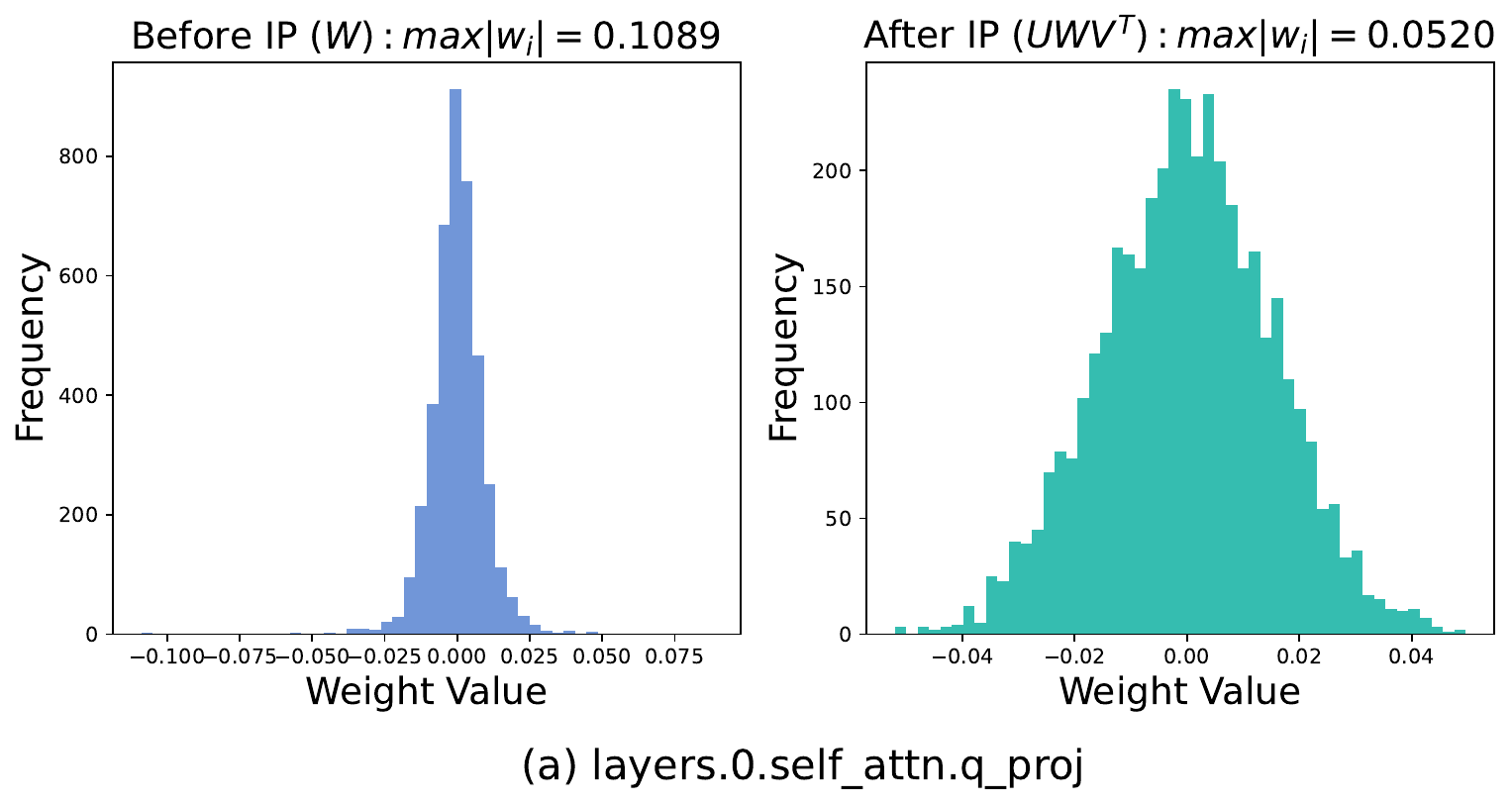}
    \includegraphics[width=0.55\linewidth]{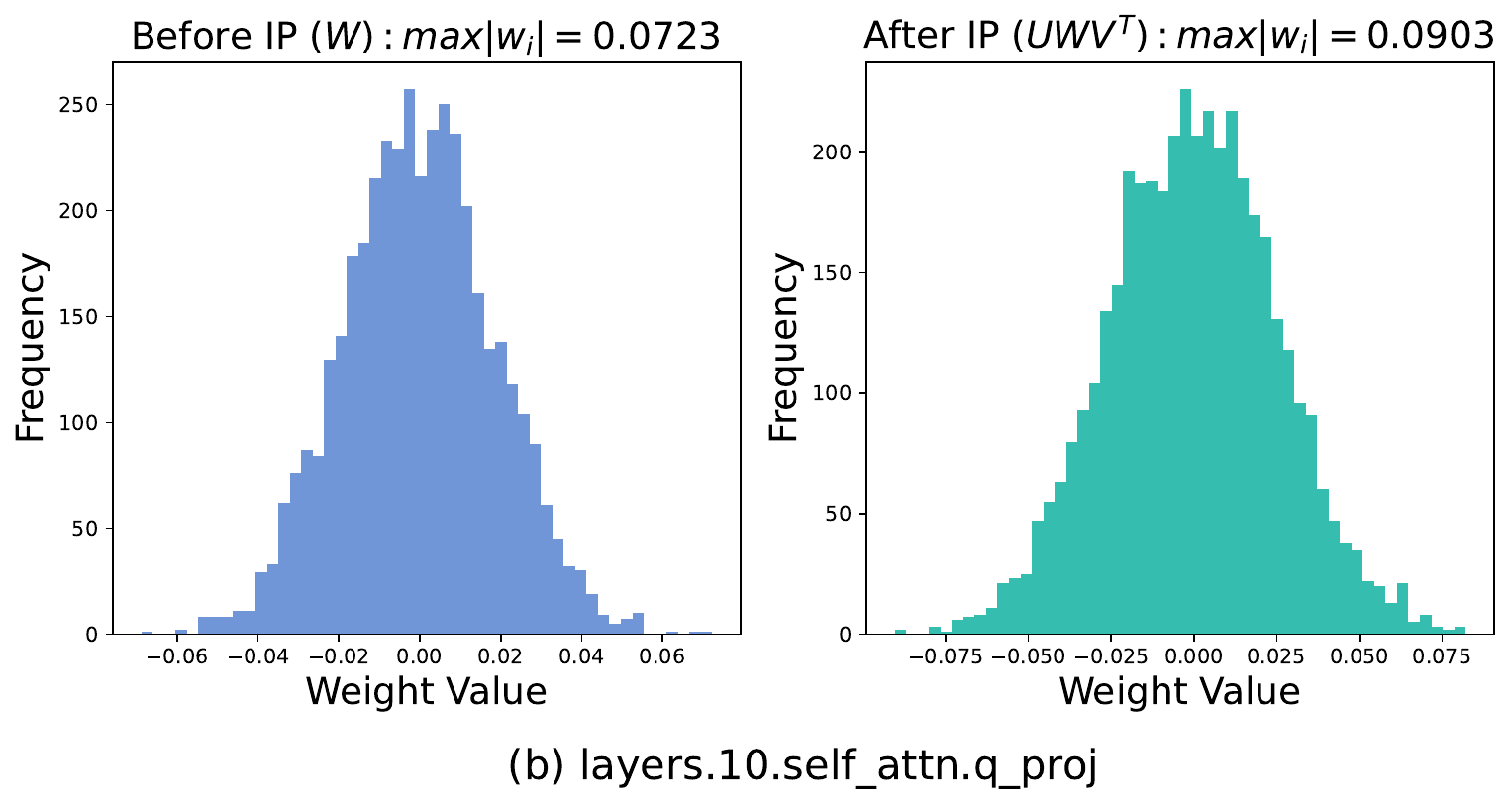}
    \vspace{-.1in}
    \caption{Examples of weight distribution in query\_projection layers before/after incoherence processing.}
    \label{fig:ip_q_proj}
\end{figure}
\begin{figure}[h]
    \centering
    \includegraphics[width=0.55\linewidth]{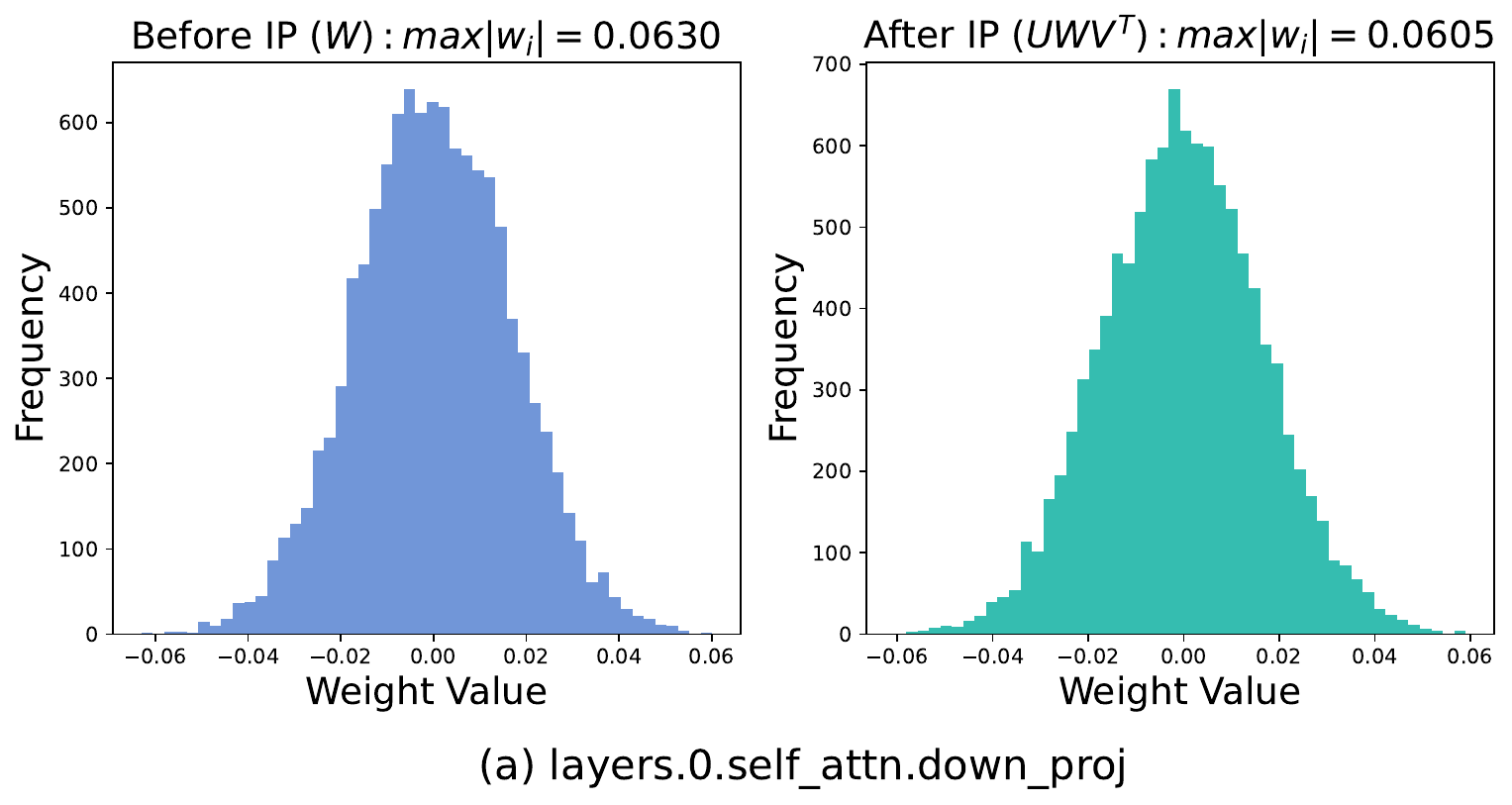}
    \includegraphics[width=0.55\linewidth]{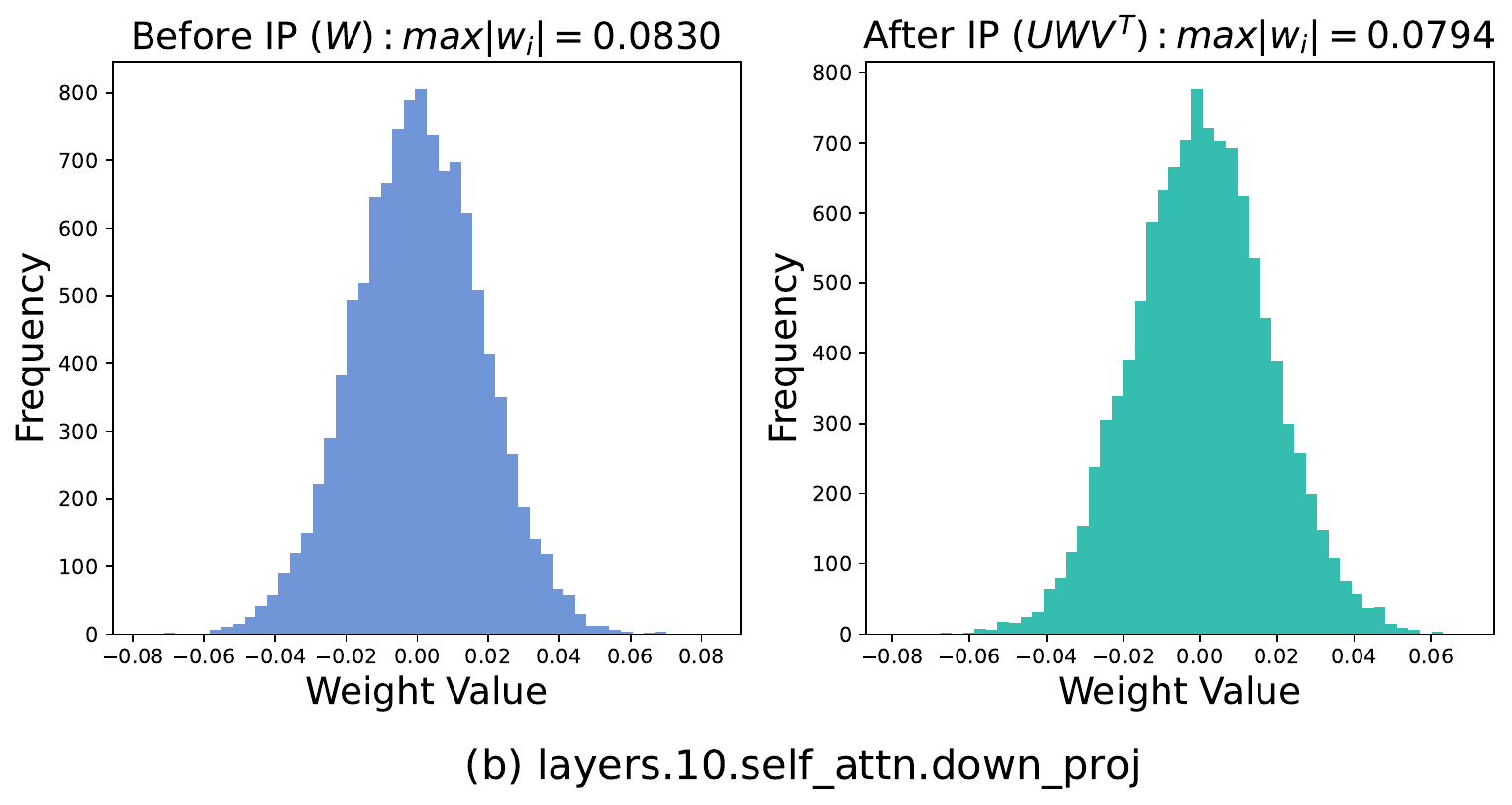}
    \vspace{-.1in}
    \caption{Examples of weight distribution in down\_projection layers before/after incoherence processing.}
    \label{fig:ip_down_proj}
\end{figure}
\end{document}